\newif\ifpreprint
\newif\ifshowackandcontribs
\newcommand{\wstar}{{w^*}}
\newcommand{\E}{\mathbb{E}}
\newcommand{\Elocaltempered}{\E_{w|\wstar, \gamma}^\beta}
\newtheorem{definition}{Definition}
\newtheorem{theorem}{Theorem}
\newtheorem{lemma}{Lemma}
\newcommand{\set}[1]{\left\{#1\right\}}
\newcommand{\brac}[1]{\left(#1\right)}
\newcommand{\abs}[1]{\left|#1\right|}
\newcommand{\sbrac}[1]{\left[#1\right]}
\def\R{{\ensuremath {\mathbb R}}}
\def\Q{{\ensuremath {\mathbb Q}}}
\def\N{{\ensuremath {\mathbb N}}}
\def\E{{\ensuremath {\mathbb E}}}
\def\model{{\ensuremath {\mathcal M}}}
\newcommand{\emploss}{\mathrm{L}}
\newcommand{\poploss}{\mathcal{L}}
\newcommand{\outcomespace}[1][X]{\mathcal{#1}}
\NewDocumentCommand{\dataseq}{ O{n} O{x} }{\ensuremath{\mathbf{ #2 }^{\left( #1 \right) }}}
\NewDocumentCommand{\kldiv}{ O{q} O{p} }{\ensuremath{D_{\mathrm{KL}}\left( #1 \middle \| #2 \right)}}
\newcommand{\length}{\ensuremath{\mathfrak{len}}}
\newcommand{\vol}{\ensuremath{\mathrm{Vol}}}
\newcommand{\entropy}{\ensuremath{\mathcal{H}}}
\newcommand\encode[1][\,\cdot\,]{\ensuremath{\llbracket #1 \rrbracket}}
\newcommand\norm[1][\,\cdot\,]{\ensuremath{\left\| #1 \right\|}}
\title{Compressibility Measures Complexity: Minimum Description Length Meets Singular Learning Theory}
\author{
  Einar Urdshals\textsuperscript{=} \\
  Timaeus \\
  \texttt{einar@timaeus.co} \\
  \And
  Edmund Lau\textsuperscript{=} \\
  UK AI Security Institute \\
  \texttt{edmund.lau@dsit.gov.uk} \\
  \And
  Jesse Hoogland \\
  Timaeus \\
  \texttt{jesse@timaeus.co} \\
  \And
  Stan van Wingerden \\
  Timaeus \\
  \texttt{stan@timaeus.co} \\
  \And
  Daniel Murfet \\
  Timaeus \\
  \texttt{daniel@timaeus.co}
}
\begin{document}

\maketitle

\ifshowackandcontribs
\begingroup
\renewcommand\thefootnote{}
\footnotetext{= Equal contribution.}
\endgroup
\fi

\begin{abstract}
    We study neural network compressibility by using singular learning theory to extend the minimum description length (MDL) principle to singular models like neural networks. Through extensive experiments on the Pythia suite with quantization, factorization, and other compression techniques, we find that complexity estimates based on the local learning coefficient (LLC) are closely, and in some cases, linearly correlated with compressibility. Our results provide a path toward rigorously evaluating the limits of model compression.
\end{abstract}

\section{Introduction}
\label{sec:intro}

A central challenge in deep learning is to measure a model's \textit{complexity}, that is, the amount of information about the dataset that is encoded in its parameters. This cannot be trivially derived from the loss because there are ways to achieve a given level of loss that involve different quantities of information: for example, the network can memorize the training data (encoded using a relatively large fraction of its weights) or discover a general solution (encoded using a small number of weights). A measurement that could distinguish these two kinds of solutions would be useful, for example, in predicting how a network will behave out of distribution. How then are we to measure this quantity?

One simple practical answer involves \emph{compression}: given a loss tolerance $\epsilon > 0$ and some compression scheme with parameter $P$ (such that larger $P$ means more compression) let $P_{\text{max}}$ be the amount of compression that increases the loss from its original value $L$ up to the threshold $L + \epsilon$. Intuitively, if the network encoded its solution to the constraints in the data using a small fraction of its weights, then it could ``withstand'' a large amount of compression and $P_{\text{max}}$ will be large. If the network has used all of its capacity to encode the solution then we expect $P_{\text{max}}$ to be small. Given the practical importance of compression techniques like quantization, this seems like a useful measure of model complexity. However, the theoretical status of this notion of ``compressibility'' is \emph{a priori} unclear.

The informal relationship between compressibility and complexity goes back to \citet{lecun1989optimal,hochreiter1997} and has been the basis for theoretical bounds on generalization error \citep{arora2018stronger}. It is clear that compressibility in the above sense must be related to ideas like minimum description length (MDL;  \citealt{grunwald2019minimum}). In this paper we investigate the relation between various practical compression schemes and MDL via singular learning theory (SLT; \citealt{watanabeAlgebraicGeometryStatistical2009}) and the estimator for a measure of model complexity known as the local learning coefficient \citep{lau2024local} and in this way provide some theoretical basis for the intuitive connection between compressibility and complexity in the setting of deep learning.

\ifpreprint
\newpage
\fi

\paragraph{Contributions.}
We make the following contributions:

\begin{itemize}
    \item \textbf{We derive a \textit{singular} MDL principle} (\cref{sec:theory}): Using ideas from singular learning theory (SLT; \citealt{watanabeAlgebraicGeometryStatistical2009}), we extend the minimum description length (MDL; \citealt{grunwald2019minimum}) principle to neural networks and prove that there is a two-part code for which the asymptotic redundancy involves the local learning coefficient (LLC; \citealt{lau2024local}), a measure of model complexity from SLT. In contrast to the classical treatment of MDL, where geometric invariants like the curvature determined by the Hessian appear in the description length, the important geometric feature in the singular case is \emph{degeneracy} (\cref{fig:smdl-pedagogical}).
    \item \textbf{We compare the LLC to compressibility}: in the setting of compression via quantization and factorization we study empirically the relation between compressibility and the LLC, by plotting them against each other for a range of models form the Pythia family up to 6.9B parameters, across training checkpoints. As expected we find that models with larger LLCs tend to be less compressible. For quantization we observe a particularly close relationship: over a large fraction of training steps there is a linear relationship between the estimated LLC and the compressibility measured in bits. 
\end{itemize}

From these results we draw two main conclusions. Firstly, the informal notion of compressibility as a measure of model complexity is consistent with the LLC estimate, which has a sound theoretical foundation. Secondly, compressibility in Pythia models serves as an independent check on the practice of using LLC estimates for models at these scales; this is valuable since we lack theoretical knowledge of the true LLC for large transformer models (see \cref{sec:estimated_vs_true}).

\begin{figure}[t!]
    \centering    \includegraphics[width=\linewidth]{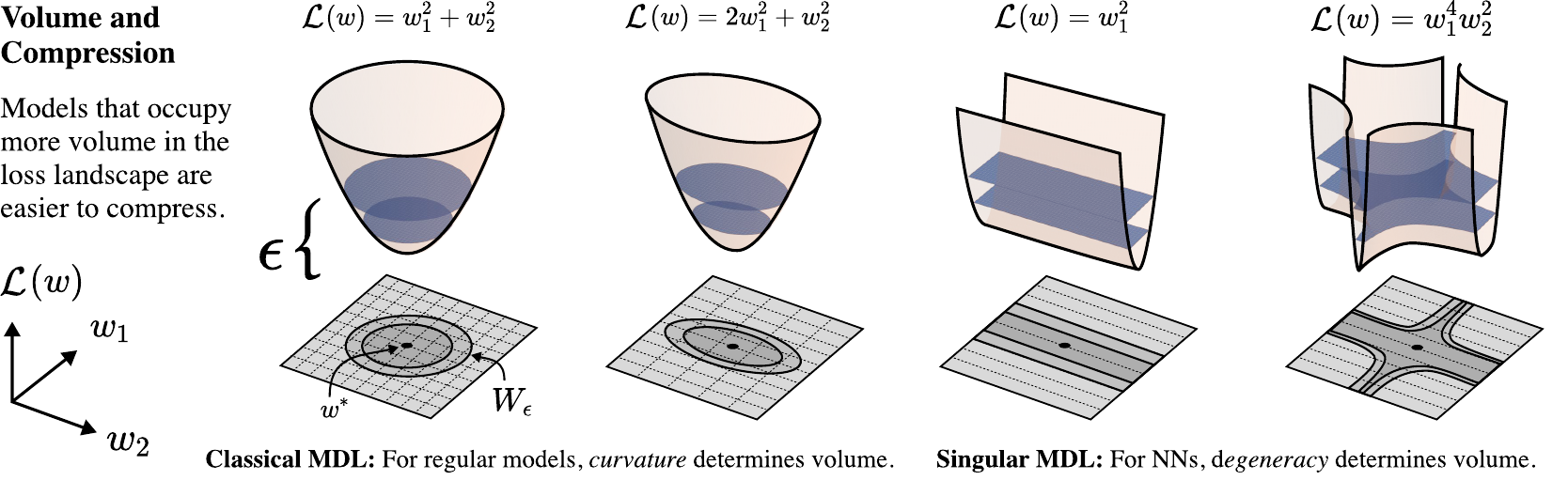}%
    \caption{\textbf{Loss landscape volume determines model compressibility.} We generalize the minimum description length (MDL) principle to singular models like neural networks, which exhibit redundancy in their loss landscape (right two panels). This redundancy, or ``degeneracy'', is the leading order contribution to model compressibility --- \textit{not} the curvature as determined by the Hessian (left two panels). From left to right: (1) regular  model with symmetric quadratic loss; (2) regular model with elliptical quadratic loss showing different curvatures along principal axes; (3) minimally singular model with a redundant parameter, creating a valley of degenerate optima; (4) 
    ``normal-crossing'' singularity showing higher order degeneracy.}
    \label{fig:smdl-pedagogical}
\end{figure}

\section{Related Work}

\paragraph{Network compression in deep learning.} There is a large literature on model compression, which is evolving rapidly. A standard reference is \citet{han2016deep}, and newer surveys include \citet{hoefler2021sparsity,wang2024modelcompressionefficientinference}. It has long been recognized that the ``effective dimension'' of deep neural networks is typically much smaller than the number of parameters \citep{maddox2020rethinking}. This is widely understood as one reason why model compression is possible \citep{lecun1989optimal, hassibi1993optimal, denil2013predicting}. Pruning models by discarding small magnitude weights, or using the spectrum of the Hessian to determine low saliency weights, coupled with the empirical success of such pruning methods, has led to an informal working understanding of effective dimension in terms of ``how much compression can be done without sacrificing too much performance.'' Nonetheless, the theoretical basis for using, e.g., the Hessian spectrum to determine effective dimension remains weak. The existence of ``lottery tickets'' (that is, sparse and trainable subnetworks at initialization) also suggests a large degree of redundancy in the final trained parameter \citep{frankle2018the}.

\paragraph{Intrinsic dimension of fine-tuning.} Related to, but distinct from, the low effective dimension of trained neural networks is the low observed ``intrinsic dimension'' of fine-tuning pretrained LLMs \citep{li2018measuring}. Here the intrinsic dimension refers to the minimum dimension of a hyperplane in the full parameter space in which the fine-tuning optimization problem can be solved to some precision level. This can be many orders of magnitude smaller than the full dimension; for example, \citet{aghajanyan-etal-2021-intrinsic} note that $200$ parameters are enough to solve a fine-tuning problem for a RoBERTa model (with 335M parameters) to within 90\% performance of the full model. This observation that the update matrices in LLM fine-tuning have a low ``intrinsic rank'' led to the introduction and widespread usage of low-rank adaptation for fine-tuning \citep{hu2022lora}. The relation of this intrinsic dimension to the effective dimension of the full pretrained model is unclear. 

For additional related work see \cref{app:related-work}.

\section{Theory: Singular MDL}
\label{sec:theory}
MDL is the canonical theoretical framework that relates compressibility and model complexity. The idea of measuring the complexity of a trained model or at a given local minimum of the population loss landscape is well-known in the literature on MDL \citep{grunwald2019minimum} and was used by \citet{hochreiter1997} in an attempt to quantify the complexity of a trained neural network. However, these classical MDL treatments make the assumption that models are ``regular'' meaning that the parameter-to-distribution map, $w \mapsto p_w$, is one-to-one (which implies that there is a unique global minimum) and that the Fisher information matrix, $I(w)$ is everywhere non-singular (middle-left panel of \cref{fig:smdl-level-sets}). Since this assumption is invalid for neural networks, the resulting theory does not apply. In this section, we start from the MDL principles and use insights from SLT to extend its applicability to singular models like ANNs. 

\subsection{Setup}
Let $\outcomespace$ denote a sample space and let $q^{(n)} \in \Delta(\outcomespace^n)$ be an unknown data-generating distribution on the space of $\outcomespace$-sequences $\dataseq[n] = (x_1, \dots, x_n) \in \outcomespace^n$ of length $n \in \N$. We assume that $\outcomespace$ is finite (e.g., the token vocabulary for modern transformer language models). Any distribution $p^{(n)}$ on $\outcomespace^n$ gives rise to a prefix-free (thus uniquely decodable) encoding, $\encode[{\dataseq[n]}]_{p^{(n)}}$, for any $\dataseq[n] \in \outcomespace^n$ with code length given by $\length\brac{\encode[{\dataseq[n]}]_{p^{(n)}}} = -\log p^{(n)}\brac{\dataseq[n]}$. Conversely, every prefix-free encoding can be used to define such distributions \citep{Kraft1949-gc,McMillan1956-qe}, which we shall call an \emph{encoding distribution}. 

A central observation of the MDL principle is that any statistical pattern or regularity in $q^{(n)}$ can be exploited to compress samples $\dataseq[n]$ of $q^{(n)}$. If a learning algorithm can extract these regularities through only samples $\dataseq[n]$, then it has implicitly learned a good compression of $\dataseq[n]$. This is the oft-invoked principle of ``learning as compression''. Throughout, we will consider learning machines with a finite-dimensional parameterized statistical models, denoted as $\model = \set{p^{(n)}_w \in \Delta(\outcomespace^n) \mid w \in W}$ where $W \subset \R^d$ is a compact $d$-dimensional parameter space. An important example for this work is the case of modern auto-regressive language models where $\dataseq[n]$ are the token sequences and the model $p^{(n)}_w$ takes the form
$$
p^{(n)}_w(\dataseq[n]) = p_w(x_1) p_w(x_2 | x_1) p_w(x_3 | x_1, x_2) \dots p_w(x_n | x_1, \dots, x_{n - 1})
$$
for some learned sequence-to-next-token model $p_w$, such as a transformer \citep{Vaswani2017-pw}.\footnote{This is an example of what is known as prequential code in MDL literature.} For exposition, we will focus on the case where both the data and model are independent and identically distributed (i.i.d., see discussion of assumptions in \cref{app:theory-details}). This means that, for every $n \in \N$, the data distribution and model distribution on $\outcomespace^n$ are respectively given by 
\begin{align*}
q^{(n)}\brac{\dataseq[n]} = \prod_{i = 1}^n q(x_i) \quad\text{and} \quad p^{(n)}_w\brac{\dataseq[n]} = \prod_{i = 1}^n p_w(x_i)
\end{align*}
for some unknown $q$ and model $\set{p_w}$. Under such assumptions, the unique minimum average code length in the long-run (large $n$) is achieved by setting the data generating distribution itself as the encoding distribution, i.e., setting $p^{(n)} = q^{(n)}$. The expected per-symbol excess length compared to this minimum is measured by the Kullback-Leibler (KL) divergence,
$$
\kldiv[q][p_w] := \E_{x \sim q}\sbrac{\log \frac{1}{p_w(x)}} - \E_{x \sim q}\sbrac{\log \frac{1}{q(x)}} = \entropy\brac{q, p_w} - \entropy(q),
$$
where $\entropy$ denotes the (cross-)entropy. We will call the first parameter-dependent term above the population loss and denote it as $\poploss(w)$. Note that the empirical estimate of $\poploss$ given by $\emploss_n(w) = -\frac{1}{n} \sum_{i = 1}^n \log p_w(x_i)$ is the usual per-token cross-entropy criterion used for training modern transformer network, also known as the average negative log-likelihood at $w$. 

\subsection{Two-Part Codes}

We will focus on the case of two-part codes to clarify the underlying geometrical phenomenon and explain the direct relevance to neural network compression. To communicate with two-part codes, the sender and receiver agree to first communicate an encoding distribution $p$ by sending some encoded representation $\encode[p]$, before sending the  data encoded with $p$, $\encode[{ \dataseq[n] }]_p$. Once $p$ is received, the receiver can reconstruct the encoding distribution and decode any message encoded with $p$. The result is a message of length
$$
\length(\encode[p]) + \length(\encode[{\dataseq[n]}]_{p}) 
= \length(\encode[p]) + \sum_{i = 1}^n \log \frac{1}{p(x_i)}.
$$
One measure of code performance, known as \emph{redundancy}, is defined to be the excess code length as compared to the encoding generated using the data distribution itself as encoding distribution. So, if the data is drawn i.i.d.\ from $q$, then the redundancy of the two-part code is given by
\begin{equation}
R_n := \length(\encode[p]) + \length(\encode[{\dataseq[n]}]_{p}) - \sum_{i} \log \frac{1}{q(x_i)} = \length(\encode[p]) + \sum_{i} \log \frac{q(x_i)}{p(x_i)}. \label{eq: redundancy}    
\end{equation}

Notice that if the chosen encoding  distribution $p$ is sufficiently good in the sense of having small KL-divergence, $\E_q \sbrac{\log \frac{q(x)}{p(x)}} = \kldiv[q][p]$, from $q$, then it is worth paying $\length\brac{\encode[p]}$ bits to obtain a cheap encoding for samples drawn from $q$. 

Suppose the sender and receiver have a shared knowledge of a finite dimensional statistical model (e.g., a neural network architecture). This allow them to communicate using codes specific to the \emph{biases} implicit in the model architecture. In other words, the model provides an implicit prior on the set of distributions $\model$, allocating codes of varying length depending on which hypotheses are considered simple or complex.

To the state the main theorem let $\model = \set{p_w \in \mathring{\Delta}_m(\outcomespace) : w \in W \subset \R^d}$ be a statistical model of finite dimension $d \in \N$. We will assume some technical conditions on the model laid out in Appendix \eqref{app: assumptions}. In particular, we require that all distributions in our models are in the restricted simplex $\mathring{\Delta}_m(\outcomespace)$ of uniformly lower bounded distributions where given $m > 0$ and a distribution $p$ over $\outcomespace$ we say $p \in \mathring{\Delta}_m(\outcomespace)$ if $\min_{x \in \outcomespace} p(x) \geq m$.

\begin{theorem}\label{theorem: main result}
    There exists a two-part code such that, for any realizable data generating distribution $q \in \model$ and dataset $\dataseq[n]$ drawn i.i.d.\ from $q$, the asymptotic redundancy is
    $$
    R_n = \lambda \log n - (m - 1) \log \log n + O_p(1)
    $$
    where $\lambda$ is the learning coefficient of $q$ for the model and $m$ is the multiplicity. 
\end{theorem}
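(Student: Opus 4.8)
The plan is to identify the redundancy of a well-chosen two-part code with the asymptotics of the singular-learning-theory ``partition function'' $Z_n := \int_W e^{-n\emploss_n(w)}\varphi(w)\,\lebesgue(dw)$ for a fixed smooth positive prior $\varphi$ on $W$. Writing $q = p_{\wstar}$ for the realizable truth and $\poplossnorm(w) := \kldiv[q][p_w] = \poploss(w) - \poploss(\wstar)$, Watanabe's free-energy theorem gives $-\log Z_n = n\emploss_n(\wstar) + \lambda\log n - (m-1)\log\log n + O_p(1)$, with $\lambda$ the learning coefficient and $m$ the multiplicity (the order of the largest pole of $z \mapsto \int_W \poplossnorm(w)^z\varphi(w)\,\lebesgue(dw)$). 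The lower bound $R_n \geq \lambda\log n - (m-1)\log\log n + O_p(1)$ is then the soft direction: for any countable family $\set{p_{w_j}}$ with weights $\pi_j$, the two-part code length is at least $-\log\sum_j \pi_j\, p_{w_j}^{(n)}(\dataseq[n])/q^{(n)}(\dataseq[n])$, the redundancy of the associated discrete Bayes mixture, and when the $w_j$ and $\pi_j$ form a sufficiently fine discretization of $\varphi$ this mixture agrees with $e^{n\emploss_n(\wstar)}Z_n$ up to constant factors, so the classical SLT asymptotics apply. The content of the theorem is the matching upper bound: exhibiting one explicit two-part code that attains this redundancy.

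For the upper bound I would first localize. By compactness of $W$ and a partition of unity it suffices to build, on a fixed neighborhood of the zero set $W_0 = \set{w : \poplossnorm(w) = 0}$, a code with redundancy of the stated form, since any $w$ bounded away from $W_0$ has $\poplossnorm(w)$ bounded below and so costs $\Omega(n)$ nats in the data part of the code and is never selected. On that neighborhood I would apply resolution of singularities (Hironaka): there is a real-analytic manifold $U$ and a proper analytic map $g : U \to W$ such that, chart by chart, $\poplossnorm(g(u)) = a(u)\,u_1^{2k_1}\cdots u_d^{2k_d}$ with $a > 0$, and the Jacobian is $\abs{\det g'(u)} = b(u)\,u_1^{h_1}\cdots u_d^{h_d}$ with $b > 0$; by definition $\lambda = \min_l (h_l+1)/(2k_l)$ and $m$ is the number of coordinates (across charts) attaining that minimum. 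The code, which may depend on $n$, is then: place a grid in $u$-space with coordinate-wise mesh chosen (as a function of $n$) to match the scales $n^{-1/(2k_l)}$, push it forward through $g$ to a finite family $\set{w_j}\subset W$ automatically concentrated near $W_0$, and give $w_j$ a codeword of length $-\log\pi_j$ with $\pi_j$ proportional to the pushed-forward, discretized prior.

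The redundancy of this code, for $\dataseq[n]$ drawn from $q$, is $\min_j\sbrac{-\log\pi_j + n\emploss_n(w_j) - n\emploss_n(\wstar)}$, which I would analyze by splitting $n\emploss_n(w_j) - n\emploss_n(\wstar) = n\poplossnorm(w_j) + \sqrt n\,\xi_n(w_j)$ into deterministic and fluctuation parts. Minimizing $-\log\pi_j + n\poplossnorm(w_j)$ reduces, in resolved coordinates, to a tractable optimization over the monomial exponents: because $\poplossnorm(g(u)) \asymp \prod_l u_l^{2k_l}$ is a product, the optimum concentrates on the stratum cut out by the coordinates of smallest ratio $(h_l+1)/(2k_l) = \lambda$ --- contributing $\lambda\log n$ --- and the $m$-fold degeneracy of that minimizing stratum produces the correction $-(m-1)\log\log n$. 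This is the discrete counterpart of the Laplace/Mellin estimate behind Watanabe's theorem, equivalently the SLT volume law $\lebesgue\set{\poplossnorm(w) < \epsilon} \asymp \epsilon^{\lambda}\abs{\log\epsilon}^{m-1}$, which I would cite rather than re-derive. For the fluctuation part, the restricted-simplex assumption $p_w \in \mathring{\Delta}_m(\outcomespace)$ yields the relative-variance bound $\E_q[(\log(q/p_w))^2] \lesssim \poplossnorm(w)$, so $\sqrt n\,\xi_n(w_j) = O_p\brac{\sqrt{n\poplossnorm(w_j)}}$; at the optimizing grid point $n\poplossnorm(w_j) = O(1)$, so this fluctuation --- hence the full remainder --- is $O_p(1)$.

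I expect the main obstacle to be controlling the empirical process \emph{uniformly} over the (data-independent but fine) grid, so that fluctuations cannot let a geometrically poor parameter --- one whose $\poplossnorm(w_j)$ is not small --- look good and spuriously lower the minimum; this requires the full SLT machinery for the normalized empirical process on the resolved space $U$ (tightness and a Gaussian-process limit), not a pointwise estimate, and it is precisely what pins down the $O_p(1)$ term. Two secondary difficulties: (i) globally patching the local resolution charts via a partition of unity so that the coefficient of $\log\log n$ is the \emph{global} multiplicity (the maximum over charts, without double counting), and (ii) making the discrete-versus-continuous Bayes-mixture comparison in the lower bound rigorous uniformly in $\dataseq[n]$ --- a standard but delicate discretization estimate, which one could instead replace by a comparison with the normalized-maximum-likelihood code.
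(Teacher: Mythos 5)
Your approach to the upper bound is genuinely different from the paper's. The paper discretizes in \emph{distribution space}: it builds an $\epsilon$-net $Q_\epsilon\subset\model$ under KL divergence and assigns to a net point $p^*$ a codeword of length $\log\brac{\vol(W)/V^R_{p^*}(\epsilon)}$, where $V^R_{p^*}(\epsilon)=\vol\set{w:\kldiv[p_w][p^*]\le\epsilon}$ is the \emph{parameter-space volume of the entire KL-ball preimage}. With $\epsilon_n=a/n$, a KL quasi-triangle inequality (valid on the restricted simplex $\mathring{\Delta}_m$, Lemma~\ref{lemma: kl pseudo triangle inequality}) sandwiches $V^R_{p^*_n}(\epsilon_n)$ between sublevel-set volumes $V_q(\cdot)$, so the code length inherits $\lambda\log(1/\epsilon)-(m-1)\log\log(1/\epsilon)+O(1)$ directly from the Watanabe--Arnol'd volume law with no explicit resolution charts. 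You instead resolve singularities, lay a rectangular grid in $u$-space, push it through $g$, and take $\pi_j$ to be the prior mass of the individual cell. Both routes lean on the same SLT volume scaling; the decisive difference is which object gets assigned a single codeword.

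That is exactly where your construction, as written, breaks. Your $\pi_j$ is the prior mass of one rectangular $u$-cell with sides $\delta_l\sim n^{-1/(2k_l)}$, but the theorem needs the mass of the \emph{entire} sublevel set $\set{\poplossnorm<c/n}$, and these differ by a polynomial factor whenever more than one $k_l$ is positive. Concretely, in a two-dimensional chart with exponents $(k_1,h_1),(k_2,h_2)$ and $\lambda=(h_1+1)/(2k_1)<(h_2+1)/(2k_2)$, the best cell consistent with $n\poplossnorm(w_j)=O(1)$ sits at $u_1\sim n^{-1/(2k_1)}$, $u_2\sim 1$, and has mass roughly $u_1^{h_1}\delta_1\cdot u_2^{h_2}\delta_2\sim n^{-(h_1+1)/(2k_1)}\cdot n^{-1/(2k_2)}=n^{-\lambda-1/(2k_2)}$; the extra $n^{-1/(2k_2)}$ is the price of over-resolving the flat $u_2$-direction. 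Hence $\min_j\sbrac{-\log\pi_j+n\poplossnorm(w_j)}\gtrsim\brac{\lambda+1/(2k_2)}\log n$, overshooting the claimed $\lambda\log n$. To fix it, the cell you pay $-\log\pi_j$ for must itself fill out the order-$1/n$ sublevel set: either use non-rectangular cells shaped to level sets of $\poplossnorm$, or coarsen the mesh to $O(1)$ in every direction off the minimizing stratum, or --- the paper's choice --- assign code lengths by the preimage volume of a KL-ball rather than of a coordinate box. One further remark: the uniform empirical-process control you (rightly) flag as the main obstacle is sidestepped in the paper by a different protocol: the sender transmits the net point closest in KL to the MLE $\hat p$, not the one minimizing total two-part length, so Bernstein's inequality pointwise at $p^*_n$ (Theorem~\ref{theorem: Kn fluctuation bound}) plus the SLT fact $\kldiv[q][\hat p]=O_p(1/n)$ (Theorem~\ref{theorem: mle is order 1/n}) suffice, at the cost of only proving achievability for that specific suboptimal-looking but asymptotically tight code.
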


We refer to \citet{watanabeAlgebraicGeometryStatistical2009} for the definition of the learning coefficient $\lambda$ and multiplicity $m$.

To establish this result, we need to specify a way for the sender to communicate a specific hypothesis or distribution $p$ in the model. We note that a model generally contains uncountably many distinct distributions, yet any parameter encoding can specify at most countably many. Thus, discretization is needed. We assume the sender and receiver have a way to construct, for any $\epsilon > 0$, shared finite sets $Q_\epsilon = \set{p_1, p_2, \dots, p_{N_\epsilon} } \subset \model$ such that any $p \in \model$ belongs to some set of the form $P_\epsilon(p^*) := \set{p \in \model : \kldiv[p][p^*] \leq \epsilon}$ where $p^* \in Q_\epsilon$\footnote{This is a finite $\epsilon$-net of the model in distribution space.}. Let us define (R for Reversed)
\[
V^R_{p}(\epsilon) := \vol\brac{ \set{w \in W : \kldiv[p_w][p] \leq \epsilon} }\,.
\]
Given $p \in \model$, we assume a consistent and shared algorithm for choosing\footnote{breaking ties consistently when needed} $p^* \in Q_\epsilon$ such that $p \in P_\epsilon(p^*)$ and $V^R_{p^*}(\epsilon) = \max\big\{ V^R_{p'}(\epsilon) \mid p' \in Q_\epsilon \text{ and } p \in P_\epsilon(p') \big\}$.

Observe that this produces a partition of the model, $\model$, with each set in the partition represented by a grid point $p^*$ in $Q_\epsilon$\footnote{possibly a smaller set, in which case we take $Q_\epsilon$ to be this non-redundant set. Ideally, we would like a tight $\epsilon$-KL-sphere packing. If $\model$ is a subset of the interior of $\Delta(\outcomespace)$ simplex, itself, with non-empty interior, we can use construction similar to \citet{Balasubramanian1996-fz} to obtain such an $\epsilon$-net.}. We will then assign probability\footnote{this is only an approximate equality because the true volume should be the volume of the partitions instead of those of the $\epsilon$-nets. However, their difference can be made small via careful selection of centers $p^*$ and sphere packing arguments.} $\approx \frac{V^R_{p^*}(\epsilon)}{\vol(W)}$ to $p^*$ and therefore a code of length
\[
\length(\encode[p^*]) := \log \frac{\vol(W)}{V^R_{p^*_n}(\epsilon)}\,.
\]
% was - \log V^R_{p^*}(\epsilon) + O(1)
Notice that this is very different from putting the uniform distribution on $\model$ (e.g., by using the Jeffrey's prior on $W$ if $\model$ is regular). We are deliberately assigning shorter codes to hypotheses $p^* \in Q_\epsilon$ that are \emph{simpler according to the model's own implicit bias}: a hypothesis is simpler to state relative to a given model if it takes up more parameter volume (requires lower parameter-precision to specify its distribution) up to $\epsilon$ error tolerance. 

With such a construction, we can now calculate the two-part code length with respect to some model $\model$ for i.i.d.\ data drawn from data distribution $q$ that is realizable ($q \in \model$) and satisfies assumptions in Appendix \eqref{app: assumptions}. Let $\hat{p} = \arg \min_{p \in \model} \sum_{i = 1}^n \log \frac{1}{p(x_i)}$ be the maximum likelihood distribution and define $p^*_n$ be the grid point in $Q_\epsilon$ closest to $\hat{p}$, $p^*_n := \arg\min_{p \in Q_\epsilon} \kldiv[\hat{p}][p]$. To send the data $\dataseq[n]$ we send the encoding of $p^*_n$ and the data encoded with this distribution. Writing $K_n(p) = \frac{1}{n} \sum_{i = 1}^n \log \frac{q(x_i)}{p(x_i)}$ the redundancy of the code at tolerance $\epsilon$ is given by 
\begin{align}
R_n &= \log \frac{\vol(W)}{V^R_{p^*_n}(\epsilon)} + n K_n(p^*_n) \label{eq: redundancy after construction}\\
&= \log \frac{\vol(W)}{V^R_{p^*_n}(\epsilon)} + n \underset{(\star)}{\underbrace{\kldiv[q][p^*_n]}} + n(K_n(p^*_n) - \kldiv[q][p^*_n])\,. \label{eq:redundancy_final_twopart}
\end{align}
Now, we introduce a dependency of the tolerance on $n$ by $\epsilon_n = \frac{a}{n}$ for some $a > 0$. With this assumption, both $n\kldiv[q][p^*_n]$ and $n(K_n - \kldiv[q][p^*_n])$ are $O_p(1)$ by \cref{theorem: mle is order 1/n} and \cref{theorem: Kn fluctuation bound} respectively. Therefore, the redundancy is given asymptotically by
$$
R_n = - \log V^R_{p^*_n}(\epsilon_n) + O_p(1).
$$
In \eqref{eq:redundancy_final_twopart} we see a fundamental tradeoff: decreasing the error tolerance $\epsilon$ (a finer grid) decreases the excess code length $(\star)$ because we can find a grid point $p^*_n$ closer to $\hat{p}$ and thus $q$, but decreasing the tolerance will also decrease the volume $V^R_{p^*_n}(\epsilon)$ and thus increase the cost for communicating $p^*_n$. Similar to the case for regular models (see for example \citet{Balasubramanian1996-fz}), the optimal grid size for data set of size $n$ scale as $\epsilon_n = O\brac{\frac{1}{n}}$: any higher order rate of decay for $\epsilon_n$ implies a finer distinguishability of grid points than the number of data points $n$ can justify (the MLE itself has $\kldiv[q][\hat{p}] = O_p(1/n)$, see further discussion in Appendix \ref{app: epsilon_n}).

It remains to determine the behavior of the $V^R_{p^*_n}(\epsilon_n)$. One difficulty is that the center $p^*_n$ is a random variable depending on data and changes with $\epsilon_n$. However, it is also clear that, as $\epsilon_n \to 0$, $p^*_n$ approaches in KL-divergence to the data generating distribution $q$. Furthermore, the relevant volume will also be similar to that of the set of parameter where $p_w$ is close to $q$ defined by $V_{q}(\epsilon) := \vol\brac{\set{w \in W : \kldiv[q][p_w] \leq \epsilon}}$. \cref{theorem: volume bounds} shows that there exist $C > 0$ such that for any $\epsilon > 0$ and $p^* \in \model$ such that $\kldiv[q][p^*] \leq \epsilon$ we get
\begin{equation}
V_{q}\brac{\epsilon} \leq V^R_{p^*}(C\epsilon) \leq V_{q}\brac{\frac{C}{2}(C + 1)\epsilon}.    \label{eq: volume inclusions}
\end{equation}

This allows us to make conclusions about $V^R_{p^*}(\epsilon)$, for $p^*$ such that $\kldiv[q][p^*] \leq \epsilon$, by investigating $V_q(\epsilon)$. To do that, we invoke a central result of SLT:

\begin{theorem}[\citep{watanabeAlgebraicGeometryStatistical2009,Arnold2012-cq}] \label{thm: volume scaling} Let $f: W \to \R_{\geq 0}$ be a non-negative analytic function. Then there exist $\lambda\in \Q$ and $m \in \N$, such that the volume of the $\epsilon$-sublevel sets is given by  $\vol\brac{\set{w \in W : f(w) \leq \epsilon}} \sim c \, \epsilon^{\lambda} \brac{-\log \epsilon}^{m - 1}$ as $\epsilon \to 0$ for some constant $c > 0$. 
\end{theorem}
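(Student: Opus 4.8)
The plan is to deduce this from Hironaka's resolution of singularities together with a Tauberian theorem, working through the zeta function $\zeta(z) := \int_W f(w)^z\, dw$, which is well-defined and holomorphic for $\mathrm{Re}(z) \geq 0$ because $W$ is compact. The three moves are: (i) resolve $f$ to a monomial and read off the meromorphic continuation and pole structure of $\zeta$; (ii) identify $\zeta$ as the Mellin--Stieltjes transform of the sublevel-set volume $V(\epsilon) := \vol\brac{\set{w \in W : f(w) \leq \epsilon}}$; (iii) invoke a Tauberian theorem to convert the rightmost-pole data of $\zeta$ into the claimed asymptotics of $V$.

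For step (i): by Hironaka's theorem there is a real-analytic manifold $U$ and a proper analytic map $g : U \to W$, an isomorphism over $W \setminus \set{f = 0}$, such that in suitable local coordinates $(u_1, \dots, u_d)$ one has $f(g(u)) = u_1^{2k_1}\cdots u_d^{2k_d}$ and $\abs{\det g'(u)} = b(u)\,\abs{u_1^{h_1}\cdots u_d^{h_d}}$ with $b > 0$ analytic and $k_i, h_i$ non-negative integers (a positive analytic prefactor of $f \circ g$ is absorbed into the coordinates). Covering the preimage of a neighbourhood of $\set{f=0}$ by finitely many such charts (compactness of $W$) with a subordinate partition of unity $\set{\varphi_\alpha}$ and changing variables gives $\zeta(z) = \sum_\alpha \int \varphi_\alpha(u)\, b_\alpha(u)\, \prod_i u_i^{2 k_i z + h_i}\, du + (\text{entire})$, the entire term collecting the contribution away from the exceptional set, where $f$ is bounded below. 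Expanding each $\varphi_\alpha b_\alpha$ in a power series in the $u_i$ and integrating coordinate by coordinate, the factor $\int_0^\delta u_i^{2 k_i z + h_i + \ell}\, du_i$ contributes a simple pole at $z = -\tfrac{h_i + 1 + \ell}{2 k_i}$ (for $k_i \geq 1$; coordinates with $k_i = 0$ are free and contribute no pole). Hence $\zeta$ extends meromorphically with all poles in $\Q_{<0}$; the rightmost is at $z = -\lambda$ with $\lambda := \min_\alpha \min_{i : k_i \geq 1} \tfrac{h_i + 1}{2 k_i} \in \Q_{>0}$, and its order is the largest number $m \in \N$ of indices $i$ that simultaneously realize this minimum within a single chart. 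That this order is genuinely $m$ with positive leading coefficient (no cancellation) follows by restricting the integral to a neighbourhood of $\set{f = 0}$ where the weight may be taken identically $1$: there the monomial integrals have strictly positive coefficients, and nonnegativity of $f^z$ forces $\zeta(z) \to +\infty$ as $z \downarrow -\lambda$ along the reals at exactly this order.

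For steps (ii)--(iii): by Fubini the Mellin--Stieltjes identity $\zeta(z) = \int_0^{\|f\|_\infty} \epsilon^{z}\, dV(\epsilon)$ holds (convergent for $\mathrm{Re}(z) > -\lambda$). Since all poles of $\zeta$ are real, the only singularity on the vertical line $\mathrm{Re}(z) = -\lambda$ is the pole of order $m$ at $z = -\lambda$, and $V$ is monotone; a Tauberian theorem for such transforms (the version relating a rightmost pole of order $m$ at $-\lambda$ to power--log asymptotics, as used in SLT) then yields $V(\epsilon) \sim c\, \epsilon^{\lambda}\, (-\log\epsilon)^{m-1}$ as $\epsilon \to 0^+$ with $c > 0$ inherited from the positive leading coefficient. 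As $\lambda \in \Q$ and $m \in \N$, this is the claim.

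The essential input is Hironaka's resolution of singularities, which I would cite as a black box rather than prove; this is the only genuinely hard ingredient, and everything else is bookkeeping. The delicate point on our side is ensuring that cancellation among charts cannot lower the order of the rightmost pole of $\zeta$ below $m$; the clean way around it, used above, is to localize near $\set{f = 0}$, where the analytic weight can be taken to be $1$ and positivity of $f^z$ rules out cancellation. A fully self-contained alternative bypasses $\zeta$: resolve as above, compute each monomial sublevel integral $\int \mathbbm{1}\sbrac{u^{2k} \leq \epsilon}\, u^h\, du$ directly by iterated integration to get $\sim c_\alpha \epsilon^{\lambda_\alpha}(-\log\epsilon)^{m_\alpha - 1}$, and sum the finitely many nonnegative contributions; this trades the Tauberian theorem for a more hands-on but weight-sensitive estimate.
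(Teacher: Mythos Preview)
The paper does not prove this theorem: it is stated with attribution to \citet{watanabeAlgebraicGeometryStatistical2009} and \citet{Arnold2012-cq}, and the only commentary is the remark that it ``is a consequence of the celebrated theorem on the resolution of singularities by \citet{Hironaka1964-qh}.'' Your sketch is correct and is essentially the standard argument from those references --- in particular, the zeta-function route (resolution $\to$ monomialization $\to$ meromorphic continuation and pole structure of $\zeta(z)=\int_W f^z$ $\to$ Tauberian inversion to $V(\epsilon)$) is precisely Watanabe's proof strategy. The alternative you outline at the end, computing the monomial sublevel integrals directly and summing nonnegative chart contributions, is closer in spirit to the treatment in Arnold et al. One small quibble: absorbing the positive analytic prefactor $a(u)$ in $f(g(u))=a(u)\,u^{2k}$ into the coordinates is not always compatible with keeping the Jacobian monomial; the cleaner move is to leave $a(u)^z$ in the integrand and note it is holomorphic and bounded in $z$, hence irrelevant to the pole locations and orders. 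Otherwise there is nothing to correct --- you have supplied what the paper deliberately omits.
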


Applying this theorem to the map $w \mapsto V_q(\epsilon)$ and using Equation \eqref{eq: volume inclusions}, we get
$$
c\epsilon^\lambda \brac{- \log \epsilon}^{m - 1} \leq V^R_{p^*}(C\epsilon) \leq c \brac{\frac{1}{2}C(C + 1)}^\lambda \epsilon^\lambda \brac{-\log \epsilon - \log \frac{1}{2}C(C + 1)}^{m - 1}. 
$$
Using the fact that $(- \log \epsilon + a)^{m - 1}/(- \log \epsilon)^{m - 1} \to 1$ as $\epsilon \to 0$ for any $a \in \R$, we conclude there exist $c', c'' > 0$ such that for sufficiently small $\epsilon$, 
\begin{align}
c' \epsilon^\lambda \brac{- \log \epsilon}^{m - 1} \leq V^R_{p^*}(C\epsilon) \leq c'' \epsilon^\lambda (- \log \epsilon)^{m -1}.  \label{eq: volume asymptotic bound}
\end{align}
This in turn implies\footnote{note that Equation \eqref{eq: volume asymptotic bound} shows that $\frac{V^R_{p^*}(C\epsilon)}{V^R_{p^*}(\epsilon)} = O(1)$ for $\epsilon \to 0$.} 
\begin{equation}
    -\log V^R_{p^*}(\epsilon) = \lambda \log \frac{1}{\epsilon} - (m -1) \log \log \frac{1}{\epsilon} + O_p(1). \label{eq: code expansion epsilon version}
\end{equation}

Finally, recalling that we took grid scale to be $\epsilon_n = a / n$. For sufficiently large $a > 0$, this implies that $\kldiv[q][p^*_n] \leq \epsilon_n$ with high probability by \cref{theorem: mle is order 1/n}. Therefore, the result about in Equation \eqref{eq: code expansion epsilon version} applies and plugging in expression for $\epsilon_n$, we get 
$$
R_n = \lambda \log n - (m -1) \log \log m + O_p(1)
$$
which concludes the proof for \cref{theorem: main result}. Notice that the leading order terms above can be interpreted as model complexity: it is the code length required to communicate a sufficiently good encoding distribution $p^*_n$ in the model while maintaining an $O(1)$ excess length for the encoded message even when the number of sample $n \to \infty$.

We remark that \cref{thm: volume scaling} is a consequence of the celebrated theorem on the resolution of singularities by \citet{Hironaka1964-qh}. The scaling exponent $\lambda$ is known as the real log canonical threshold (RLCT) of the analytic function $w \mapsto \kldiv[q][p_w]$ and $m$ is its multiplicity. \citet{watanabeAlgebraicGeometryStatistical2009} was the first to make use of the resolution of singularities and thereby connect these geometrical invariants to statistical learning, showing that $\lambda \log n$ gives the leading-order term for model complexity and $\frac{\lambda}{n}$ gives the leading-order term for generalization error in Bayesian learning. In the context of machine learning, $\lambda$ is referred to as the learning coefficient. For regular models, the sublevel sets look ellipsoidal (Figure \ref{fig:smdl-level-sets} top-left), with volume $\sim \epsilon^{d/2}$ and thus the learning coefficient is $\lambda = \frac{d}{2}$ where $d$ is the parameter count. Its multiplicity is $m = 1$. Indeed, there are simpler two-part code construction for regular model that achieves $R_n = \frac{d}{2} \log n + O_p(1)$ by just having regular rectangular grid in $W$ of scale $O\brac{\frac{1}{\sqrt{n}}}$ (corresponding to KL-divergence scale of $O(1/n)$ in the space of distributions). Observe that this leading order behavior of $R_n$ for regular model is independent of data distribution $q$. For singular model, $\lambda < \frac{d}{2}$, which means models can potentially be much more compressible than their explicit parameter count suggests. Figure \ref{fig:smdl-level-sets} (top-middle and top-right) illustrates how the sublevel sets can have complex geometry with degeneracies, resulting in larger sublevel set volume that allow for higher level of compressibility.

\subsection{Relation to compressibility}\label{section:compressibility_llc}

\begin{figure}[t]
    \centering
     \includegraphics[width=0.32\linewidth]{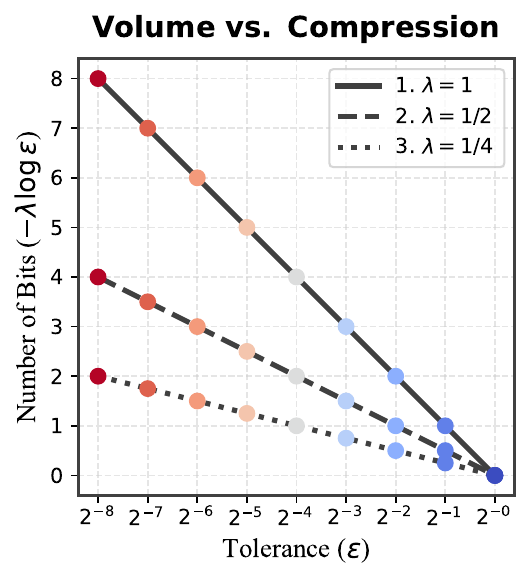}\hfill
     \includegraphics[width=0.66\linewidth]{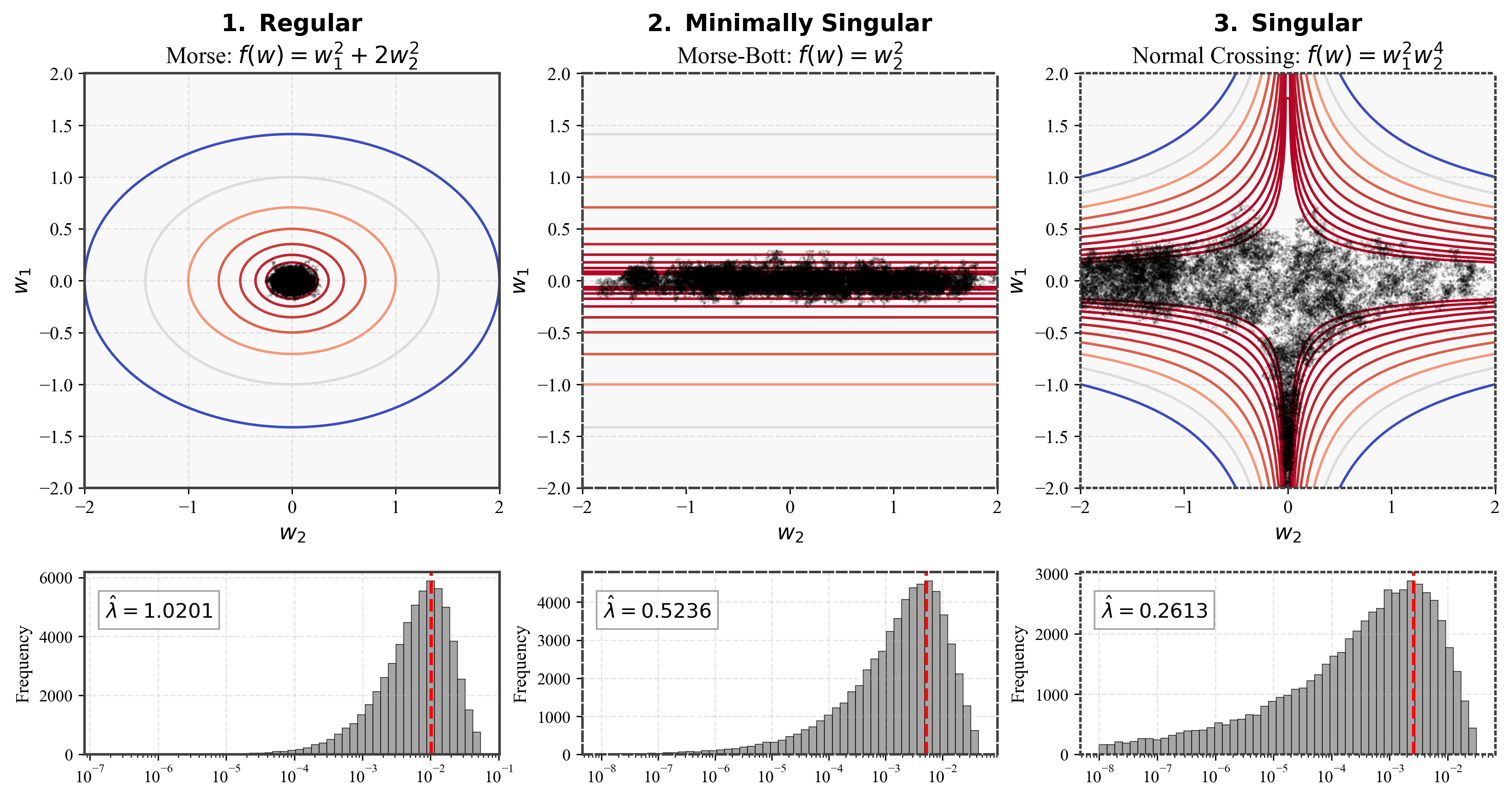}
    
    \caption{\textbf{Degeneracy determines volume and compressibility} Left: The relationship between error tolerance ($\epsilon$) and the number of bits required to encode parameters for three different model geometries shows that the volume-scaling exponent $\lambda$ (``local learning coefficient,'' LLC) determines compressibility. Right three panels show level set contours of the respective loss landscapes at $\epsilon=2,\dots,2^{-8}$ (same as in~\cref{fig:smdl-pedagogical}). 1. \textit{Regular}: A model with elliptical level sets requiring approximately $d/2 \log(1/\epsilon)$ bits ($\lambda=1$ for $d=2$) to specify a point within tolerance $\epsilon$. 2. \textit{Minimally Singular}: A model with free parameter, requiring only $(d-1)/2 \log(1/\epsilon)$ bits ($\lambda=1/2$ for $d=2$) due to degeneracy along the $w_2$ direction. 3. \textit{Singular}: A model exhibiting a more complex geometric structure requiring approximately $\lambda \log(1/\epsilon)$ bits, with $\lambda = 1/4$. }
    \label{fig:smdl-level-sets}
\end{figure}

In the previous section we established the existence of a two-part code in which the leading term of asymptotic redundancy (excess code length compared to the encoding which we would use if we knew the true data distribution) is $\lambda \log n$ where $\lambda$ is the learning coefficient. 

This is directly related to compression \citep{grunwald2019minimum} as it tells us the number of bits needed to communicate a set of samples $\dataseq[n]$ between a sender and receiver who share a statistical model. This MDL perspective captures the idea that a model class which allows for simpler representations of a given data distribution (smaller $\lambda$) offers better compression of its samples. However, it remains to explain how any given \emph{practical} compression scheme (e.g., quantization) fits into this story. In this section we provide a less formal argument based on the concepts introduced in the previous section which aims to explain this connection in a straightforward way.

From a mathematical perspective parameters live in a continuous space $W \subseteq \R^d$, but any realization in a computer uses some kind of grid with spacing $h > 0$. Fix a local minimum $w^*$ of the population loss $\poploss$ and define the local excess loss $K(w) = \poploss(w) - \poploss(w^*)$. We consider only parameters in a neighborhood near $w^*$ that is small enough that $K(w)$ is non-negative. Invoking the sublevel-set volume law (\cref{thm: volume scaling}), there exist numbers $\lambda(w^*)$ and $m(w^*)$ such that  
\begin{equation}
\label{eq:sublevel-volume}
V(\epsilon) := \vol\brac{\set{w \in W : K(w) \leq \epsilon}}
 \sim c \epsilon^{\lambda(w^*)} \bigl(-\log\epsilon\bigr)^{m(w^*)-1}\,.
\end{equation}
Here $\lambda(w^*)$ and $m(w^*)$ are known as the \textit{local} learning coefficient (LLC) and multiplicity, introduced in \citet{lau2024local}. Our in the remainder of this section is to connect the resolution $h$ to the loss tolerance $\epsilon$ through the LLC $\lambda(w^*)$.

Consider the quantization cell $C_h(w) = \{u:\|u-w\|_2 \le h/2\}$ around a parameter $w$ with a volume proportional to $h^d$. To guarantee that quantization does not increase excess loss beyond $\epsilon$, it is sufficient that the cell containing $w^*$ be contained in the $\epsilon$-sublevel set around $w^*$. A surrogate for this containment is the volume condition $\vol(C_h) \le V(\epsilon)$ or
\begin{equation}
\label{eq:cell-vs-basin-volume}
h^{d} \le \epsilon^{\lambda(w^*)}\,
\bigl(\log\tfrac{1}{\epsilon}\bigr)^{m(w^*)-1}\,.
\end{equation}
If we write $n_q$ for the number of intervals for each coordinate in our grid, then this behaves like $1/h$. We denote by $h^*$ and $n_q^*$ the level of quantization that reaches the loss tolerance $\epsilon$ and therefore makes \eqref{eq:cell-vs-basin-volume} an equality. Hence, writing the per-coordinate bit budget as $b^*(\epsilon) = \log_2 n_q^*$ we have
\begin{equation}
\label{eq:bits-vs-llc-clean}
b^*(\epsilon)
= \frac{\lambda(w^*)}{d}\,
\log_2\frac{1}{\epsilon}
+ O\!\Bigl(\frac{\log\log(1/\epsilon)}{d}\Bigr).
\end{equation}
Thus, for a fixed loss tolerance $\epsilon$ the critical bits per coordinate grows linearly with the LLC. Intuitively with larger $\lambda$ (less degeneracy), the admissible basin is smaller, so smaller cells (finer grids, more bits) are needed to keep the entire cell inside the basin. This is illustrated in \cref{fig:smdl-level-sets}.

\section{Methodology}
\label{sec:methodology}

In order to complement the theory on the singular MDL principle, we study how compressibility relates to local learning coefficient (LLC) estimates in practice. In the main text we focus on quantization (\cref{sec:methodology:quantization}). In the appendices, we also treat tensor factorization (\cref{sec:methodology:factorization}), pruning (\cref{app:pruning}) and adding Gaussian noise to the model parameters (\cref{app:Gaussian Noise}). For estimating the LLC, in  \cref{sec:methodology:llc}, we describe a preconditioned variant of the estimator in \citet{lau2024local}.

\subsection{Quantization}\label{sec:methodology:quantization}

We quantize models using a symmetric quantization scheme that includes $0$. Given $n_q \in 2 \mathbb{Z}_{> 0}$ and $m > 0$ we divide the intervals $[0,m]$ and $[-m,0]$ into $\tfrac{1}{2} n_q$ intervals of length $\Delta = m/(\tfrac{1}{2}n_q - 1)$ so that in each interval there are $\tfrac{1}{2} n_q$ possible values lying at the endpoints of subintervals (including $0$ and $\pm m$). Combining these to form $[-m,m]$ and accounting for double counting of $0$ there are $n_q$ intervals and $n_q - 1$ possible quantized values. To \emph{quantize} a parameter $w \in W \subseteq \mathbb{R}^d$ with $w = (w_1,\ldots,w_d)$ means firstly to ``clamp'' each $w_i$ to the interval $[-m,m]$ and then round these values to the nearest quantized value in this interval according to the above subdivision. More precisely we define $w^{\text{quant}}_i := \mathrm{round}\left[\frac{w_i}{\Delta} \right] \Delta$ and $w^{\text{quant}} = (w_1^{\text{quant}}, \ldots, w_d^{\text{quant}})$. Note that specifying each $w_i^{\text{quant}}$ requires $\log_2(n_q)$ bits.

In \cref{sec:results} we treat $m$ as a free parameter and search for a value that minimizes the loss of the quantized model. This is our baseline method, inspired by a more sophisticated approach in \citet{cheong2019compressing} where they allow for non-evenly spaced quantization intervals.

The increase in loss caused by quantization is a function $\Delta \text{Loss} = \emploss_n(w^{\text{quant}}) - \emploss_n(w)$ of $n_q$ and $w$. This is typically larger when $n_q$ is smaller. We measure the \emph{compressibility} of a language model with parameter $w$ by finding the smallest $n_q$ with $\Delta \text{Loss}(w) \le \epsilon$ and call this value the \emph{critical $n_q$} and denote it $n_q^*$. When $n_q^*$ is large the model is less compressible (we hit the threshold with a smaller amount of compression), and conversely when $n_q^*$ is small the model is more compressible. 

In \cref{app:quantization-without-loss-minimization} we show results of the cruder quantization method of setting $m$ to the largest parameter absolute value, which is equivalent to the scheme used by \citet{kumar2025scaling}.

\subsection{LLC estimation}\label{sec:methodology:llc}
We consider a transformer neural network that models the conditional distribution $p(y|x; w)$ of outputs $y$ (next tokens) given inputs $x$ (contexts), where $w \in W$ represents the network parameters in a compact parameter space $W$. Given samples $D_n$ from a true distribution with associated empirical loss $\emploss_n$, we define the \emph{estimated local learning coefficient} at a parameter point $\wstar$ to be:
\begin{equation}
     \hat \lambda(\wstar)
     =
     n \beta \left[ \Elocaltempered [\emploss_n(w)] - \emploss_n(\wstar) \right],
     \label{eq:LLC_loss_based}
\end{equation}
where $\Elocaltempered$ is the expectation with respect to the Gibbs
posterior \citep{bissiri2016general},
\begin{equation}\label{eq:tempered_posterior_defn_llc}
    p(w|\wstar, \beta, \gamma)
    \propto
    \exp \left\{
         -n \beta \emploss_n(w) - \frac{\gamma}{2} \|w-\wstar\|^2_2
    \right\}\,.
\end{equation}
The hyperparameters are the sample size $n$, the inverse temperature $\beta$, which controls the contribution of the loss, and the localization strength $\gamma$, which controls proximity to
$\wstar$. For a full account of these hyperparameters, we refer the reader to \citet{watanabeWidelyApplicableBayesian2013,lau2024local,hoogland2024developmental}. Our LLC estimation procedure uses the preconditioned stochastic gradient Langevin dynamics (pSGLD) algorithm~\citep{li2015preconditionedstochasticgradientlangevin}. This combines RMSNorm-style adaptive step sizes with SGLD~\citep{wellingBayesianLearningStochastic2011}. For more details on LLC estimation and its uncertainties, see \cref{app:llc-estimation}.

\section{Results}
\label{sec:results}

\begin{figure}
    \centering
    \includegraphics[width=0.64\linewidth]{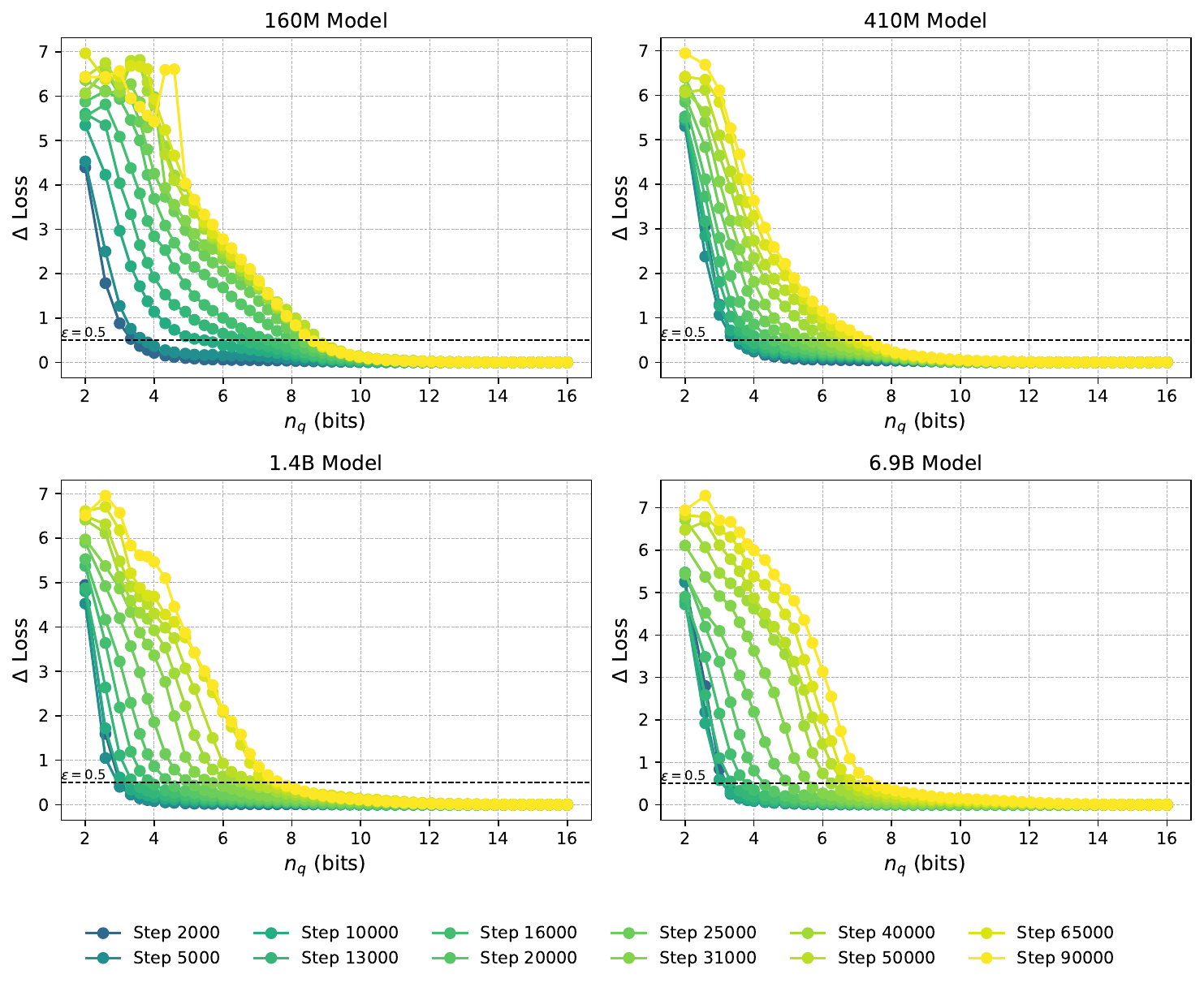}
    \includegraphics[width=0.345\linewidth]{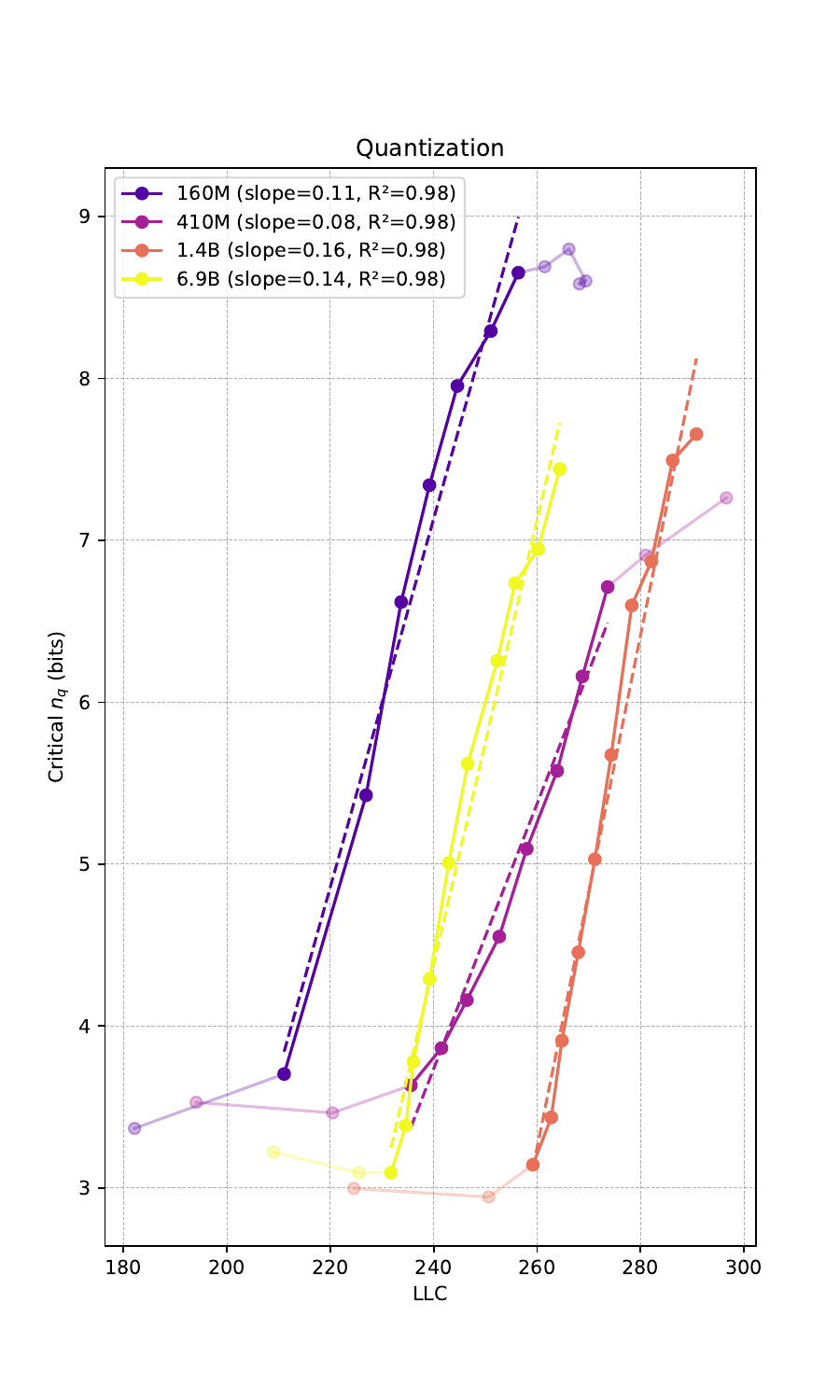}
    \caption{\textbf{Sensitivity of Pythia models of different sizes to quantization.} On the left, we show the loss increase as a function of $n_q$, with the black dashed line indicating our choice of loss tolerance $\epsilon=0.5$. On the right, we show the critical $n_q$, i.e., the $n_q$ for which $\Delta \text{Loss}=\epsilon$ as a function of the LLC for different training checkpoints of Pythia models. The transparent points are not included in the linear fits. The checkpoints increase with the LLC, that is, training time moves from left to right along all curves.}
    \label{fig:Quantization compression sensitivity}
\end{figure}

In this section we give experimental results relating compressibility under quantization with LLC estimates. For results on tensor factorization see \cref{sec:methodology:factorization}. As explained in \cref{sec:methodology:quantization}, given a loss threshold $\epsilon$ we measure compressibility by the critical number of quantization intervals $n_q^*$ at which the increase in loss $\Delta \text{Loss}$ hits the threshold.

In~\cref{fig:Quantization compression sensitivity}, left side, we show the increase in loss due to compression as a function of the number of quantized values, $n_q$. We observe the loss curves featuring a knee at a loss increase of around $\epsilon = 0.5$, and we therefore choose this as our loss tolerance. In the appendix, we show a selection of other $\epsilon$ values, showing that they lead to similar results. In the panel to the right, we observe the critical $n_q$ increasing linearly with the LLC for a large range of training steps, as expected from \eqref{eq:bits-vs-llc-clean}. We find a linear fit with $R^2=0.98$ for all the shown models. In \cref{app:quantization-with-loss-minimization} we show results for additional Pythia models, showing that these also feature ranges of training checkpoints with a linear relation between critical $n_q$ and LLC. 

\section{Conclusion}\label{sec:conclusion}

We have established a theoretical foundation for understanding neural network compression through the lens of singular learning theory, extending the minimum description length principle to account for the degenerate geometry of neural network loss landscape. Our experiments demonstrate that the local learning coefficient (LLC) provides a principled measure of compressibility, with model checkpoints featuring larger estimated LLC proving to be less resistant to compression across multiple compression techniques including quantization and factorization.

The strong linear relationships observed between LLC estimates and critical compression thresholds for quantization ($R^2\geq 0.98$) is an independent check that our current SGLD-based estimates are capturing meaningful information about model complexity for transformer models up to 6.9B parameters. This is an encouraging signal for applications of SLT to large neural networks, but significant methodological challenges remain for LLC estimation and similar techniques. The sensitivity of LLC estimates to hyperparameters and the likely gap between estimated and true values represent the primary limitations of our current framework.

Looking forward, the field is advancing along two complementary paths that will eventually converge. From one direction, practical compression techniques continue to improve, pushing closer to theoretical limits. From the other direction, the developing science of LLC estimation offers a path toward more accurate estimation of these limits. As these approaches converge, we will gain precise understanding of both the fundamental limits of compression and how closely practical techniques are approaching them. 

\ifshowackandcontribs

\section*{Acknowledgments}
We would like to thank George Wang for his feedback on earlier drafts and his advice on hyperparameter search. We would also like to thank Jacob Pfau, Geoffrey Irving, and Tomek Korbak for their valuable feedback and discussions. This project was funded by the UK AI Security Institute (AISI).

\section*{Author Contributions}
Einar Urdshals led the experimental component, including implementing, running, and analyzing all compression experiments. Edmund Lau developed the theoretical framework and the singular MDL principle. Jesse Hoogland contributed to implementation, analysis, writing, and figure design. Stan van Wingerden ran initial hyperparameter sweeps, contributed to implementation, and managed the experimental infrastructure. Daniel Murfet led the project and contributed to analysis, theory, and writing. 

\fi

\bibliographystyle{plainnat}
\bibliography{references}

\newpage
\appendix

\section*{Appendix}
This appendix provides detailed information about our methodology, experimental setup, and additional results that supplement the main paper. We organize the appendix into the following sections:

\begin{itemize}
    \item \textbf{Appendix \ref{app:related-work}}: Additional related work and discussion.
    \item \textbf{Appendix \ref{app:model-details}}: Descriptions of the Pythia model architectures used in our experiments.
    \item \textbf{Appendix \ref{app:additional-results}}: Additional experimental results:
    \begin{itemize}
        \item Additional quantization results with loss minimization, including more models, more $\epsilon$ values and critical $n_q$ vs. steps (\cref{app:quantization-with-loss-minimization})
        \item Results on tensor factorization (\cref{app:factorization})
        \item Quantization results without loss minimization, including loss increase plotted against $n_q$, three different $\epsilon$ values and critical $n_q$ vs. LLC and training step (\cref{app:quantization-without-loss-minimization})
        \item Addition of Gaussian noise, loss increase plotted against strength of Gaussian noise, three different $\epsilon$ values and critical amount of gaussian noise vs. LLCs and steps (\cref{app:Gaussian Noise})
        \item Structured pruning, our retraining protocol and loss increase as a function of pruning amount (\cref{app:pruning})
    \end{itemize}
    
    \item \textbf{\cref{app:llc-estimation}}: Details on our LLC estimation procedure, including hyperparameter settings and computational resources required.

    \item \textbf{\cref{app:theory results}}: Supplementary mathematical derivations and proofs that extend the theoretical framework presented in \cref{sec:theory}.

    \item \textbf{\cref{app:theory-details}}: Further details on the derivation of the singular MDL principle and its implications. 

    \item \textbf{\cref{app:LLM usage}}: Details on LLM usage. 
\end{itemize}

\newpage

\section{Additional related work and discussion}\label{app:related-work}

\paragraph{Model compression in industry.} Model compression techniques are widely employed by industry leaders to scale inference of large language models (LLMs), as they significantly reduce model size, memory footprint, and inference latency. The connection to compression is due to the fact that memory and latency are primarily determined by the total number of bits in the parameters \citep{dettmers2023case}. Quantization is used by Meta to compress their LLaMA models, approximately halving their memory footprint and doubling inference throughput \citep{metaQuant2024}. Knowledge distillation has similarly been utilized by Anthropic to create smaller models like Claude 3 Haiku, which achieves near-identical performance to its larger predecessor, Claude 3.5 Sonnet, while substantially lowering deployment costs \citep{anthropicDistil2024}. Pruning, particularly structured sparsity supported by NVIDIA GPUs, also shows empirical evidence of approximately doubling inference throughput by eliminating around half of the model's weights \citep{mishra2021acceleratingsparsedeepneural}.

\paragraph{Scaling laws and compression.} The training of large-scale neural networks obeys empirical scaling laws \citep{kaplan2020scaling,hoffmann2022}, which relate test loss to parameter count and tokens seen during training. Since model compression techniques work by reducing the effective parameter count, at the cost of an increase in loss, it is natural to wonder how to incorporate compression into the neural scaling laws. Most of the work to date has been on empirical scaling laws for quantization \citep{dettmers2023case,ouyang2024low,xu2024scaling,frantar2025compression, kumar2025scaling}, although there is some work on distillation \citep{busbridge2025distillation}.

\paragraph{Data-dependent compression bounds.}
Lossy compression is always defined relative to a specific loss function on a particular dataset, which implicitly chooses which capabilities to prioritize and preserve. A corollary is that any attempts to derive compression bounds based on the pretraining objective may be unnecessarily conservative: a large fraction of a model's capacity goes to memorization~\citep{carlini2023quantifying}, much of which may be irrelevant to particular capabilities. Understanding compression requires data-dependent bounds such as those considered here. 

\paragraph{Security implications.}
Our work establishes a theoretical connection between SLT and neural network compressibility, providing a principled framework that could inform future security research. By demonstrating that the local learning coefficient (LLC) correlates with practical compression limits, we lay groundwork for developing rigorous bounds on how much specific capabilities can be compressed. Future work building on these theoretical foundations could provide robust bounds on the information required to transmit specific capabilities, helping calibrate security measures and inform discussions about model weight protection. 

\paragraph{Economic drivers \& theoretical limits of compression}\label{sec:discussion-economics}
Halving the memory cost of a model can potentially double its operational value: under fixed GPU budgets, compressing parameters (e.g., pruning or quantization) directly raises the volume of token processing and thus revenue \citep{wang2024modelcompressionefficientinference, zhu2024surveymodelcompressionlarge}. This incentive is driving substantial private research and development so that the state of the art in model compression likely surpasses known public benchmarks~\citep{cheng2017survey,han2015deep}. In this situation, it is particularly valuable to understand the theoretical limit to model compression since this limit is a key factor in the economic feedback loops driving investment. This is particularly true as AI systems start to do autonomous research.

\section{Model details}\label{app:model-details}

We conduct experiments on models from the Pythia suite \citep{biderman2023pythia}, ranging from 14M to 6.9B parameters for most experiments. For Pythia, we include model checkpoints ranging from 2k to 90k, excluding later checkpoints because of apparent instability in the original training runs. Note that all Pythia models are trained on the same data in the same order from the Pile \citep{gao2020pile}.

We begin with an already (losslessly) compressed version of the Pythia models, in which layer norm weights are folded into subsequent linear layers, following the default settings in our TransformerLens-based implementation~\citep{nanda2022transformerlens}. 

\section{Additional Results}
\label{app:additional-results}

In this appendix, we provide quantization and factorization results for additional model sizes, as well as quantization results for quantization without loss minimization, addition of Gaussian noise to the parameters, and pruning.

\subsection{Quantization with Loss Minimization}
\label{app:quantization-with-loss-minimization}
\begin{figure}[tbp]
    \centering
    \includegraphics[width=.8\linewidth]{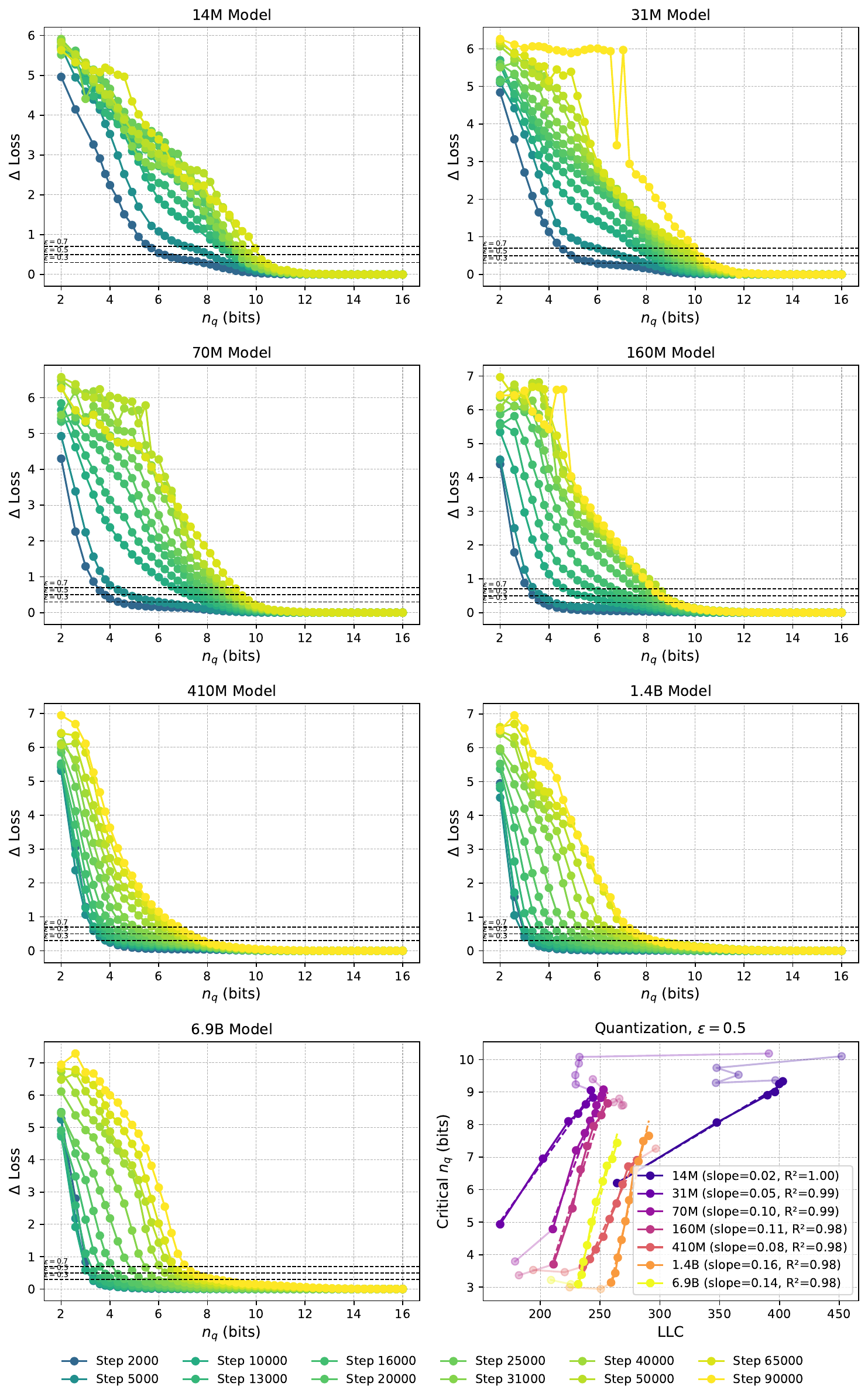}
    \caption{\textbf{Sensitivity of Pythia models to Quantization}. The panels show loss increase as a function of $n_q$, with the lower right panel showing the critical $n_q$ as a function of LLC for $\epsilon = 0.5$. Note that for Pythia-14M and Pythia-70M, checkpoint 90k is not included since our quantization algorithm fails for these checkpoints. We do a linear fit excluding the transparent points, finding  $R^2$ values of 0.98 and above.}
    \label{fig:quantization-all-models}
\end{figure}
\begin{figure}[tbp]
    \centering
    \includegraphics[width=1.\linewidth]{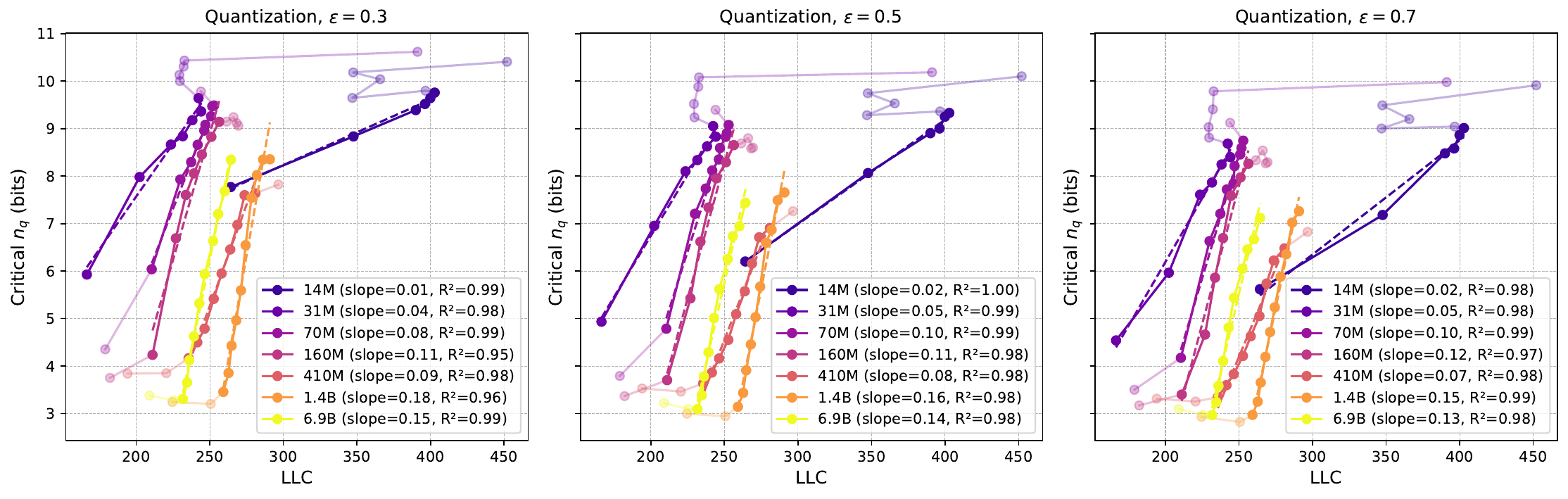}
    \caption{\textbf{Critical $\mathbf{n_q}$ vs. LLC for different $\boldsymbol{\epsilon}$ values}.  We see that the curves are qualitatively $\epsilon$-insensitive. Note that for Pythia-14M and Pythia-70M, checkpoint 90k is not included since our quantization algorithm fails for these checkpoints.}
    \label{fig:quantization-epsilon-comparison}
\end{figure}
\begin{figure}[tbp]
    \centering
    \includegraphics[width=1.\linewidth]{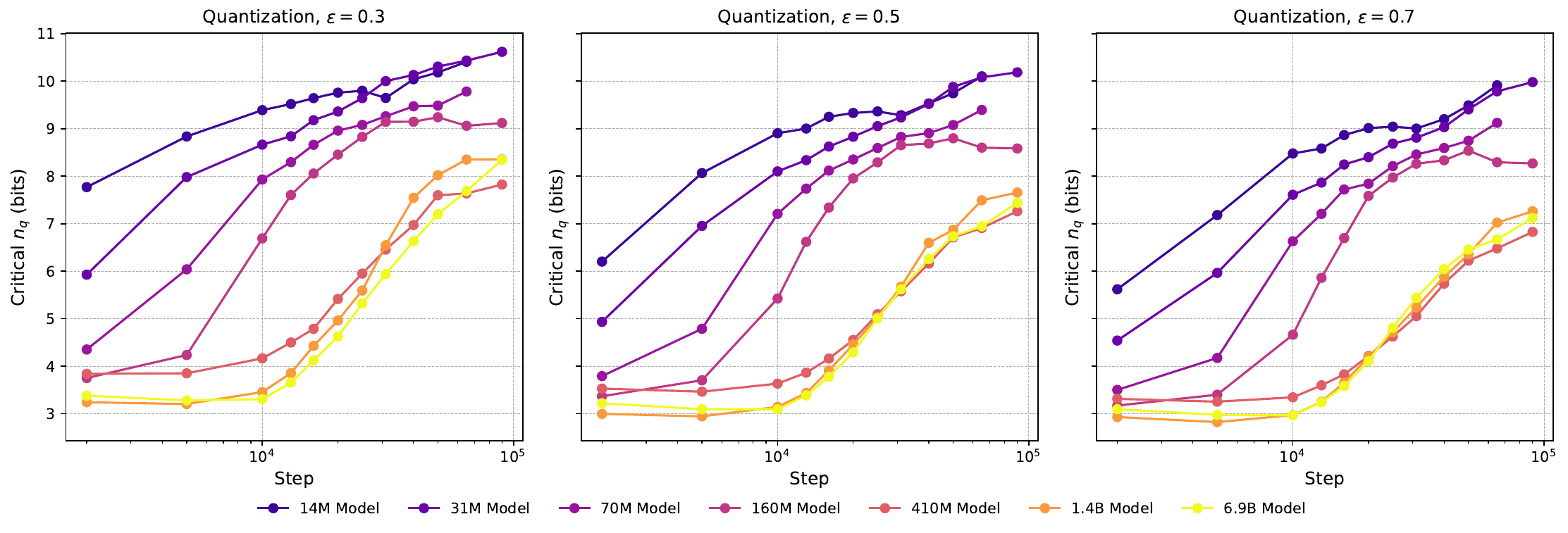}
    \caption{\textbf{Critical $\mathbf{n_q}$ vs. training step for different $\boldsymbol{\epsilon}$ values}. Note that for Pythia-14M and Pythia-70M, checkpoint 90k is not included since our quantization algorithm fails for these checkpoints.}
    \label{fig:quantization-epsilon-comparison-step}
\end{figure}
We show results on additional models for the quantization method used in \cref{sec:results}  in~\cref{fig:quantization-all-models}, showing linear fits with $R^2\geq 0.98$ for checkpoint ranges across a wide range of Pythia models. The comparison of LLC vs. critical $n_q$ for 3 different choices of $\epsilon$ is shown in~\cref{fig:quantization-epsilon-comparison}. As stated in the main body, these curves are qualitatively $\epsilon$-insensitive. For comparison, we show the critical $n_q$ as a function of training step in~\cref{fig:quantization-epsilon-comparison-step}, and observe that the curves are qualitatively similar to the ones with the LLC on the x-axis. This is expected, as the LLC is an increasing function of training steps, as shown in \cref{fig:llc-vs-step}. 

\subsection{Tensor Factorization}\label{sec:methodology:factorization}

\begin{figure}[ptb]
    \centering
    \includegraphics[width=0.64\linewidth]{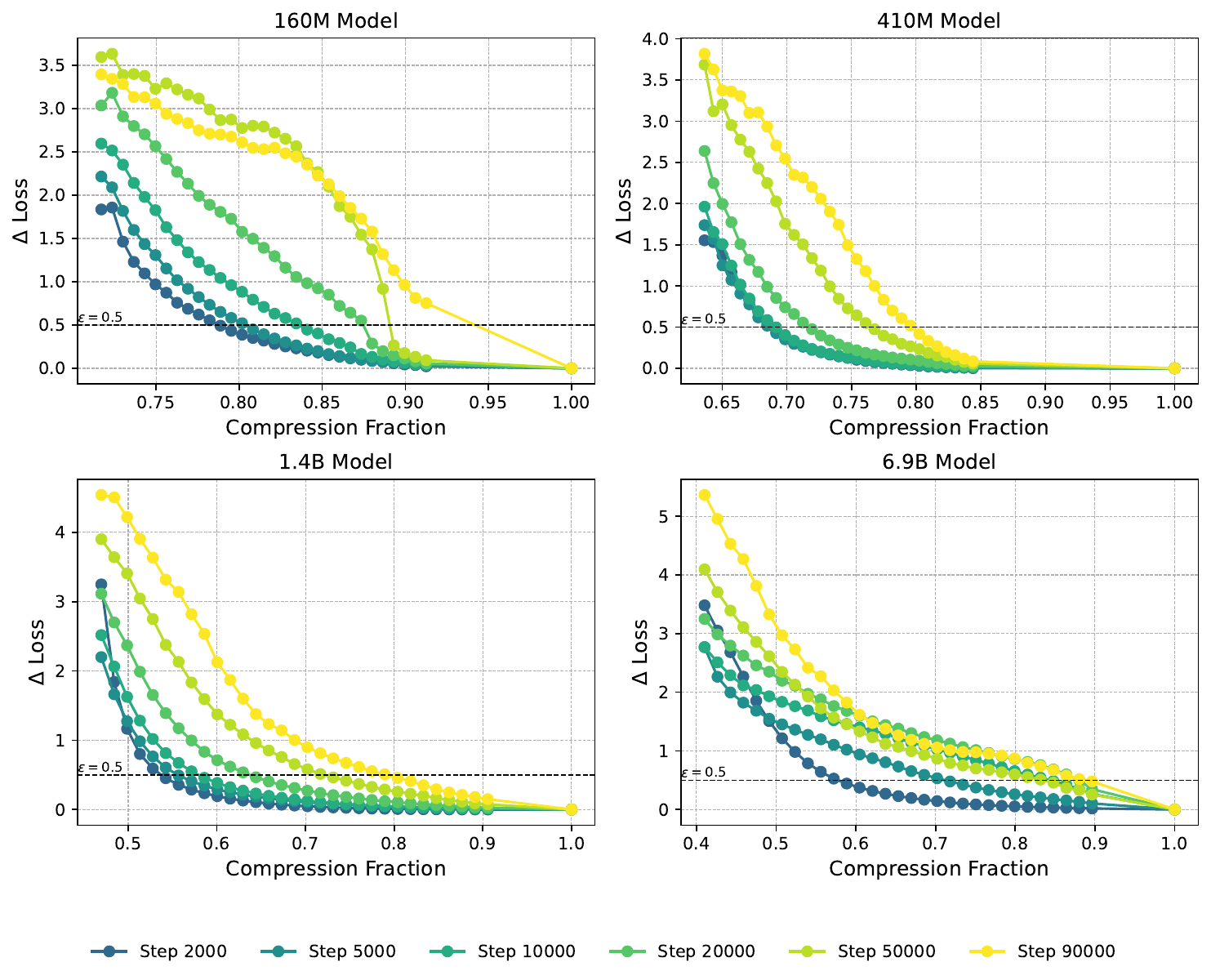}
    \includegraphics[width=0.34\linewidth]{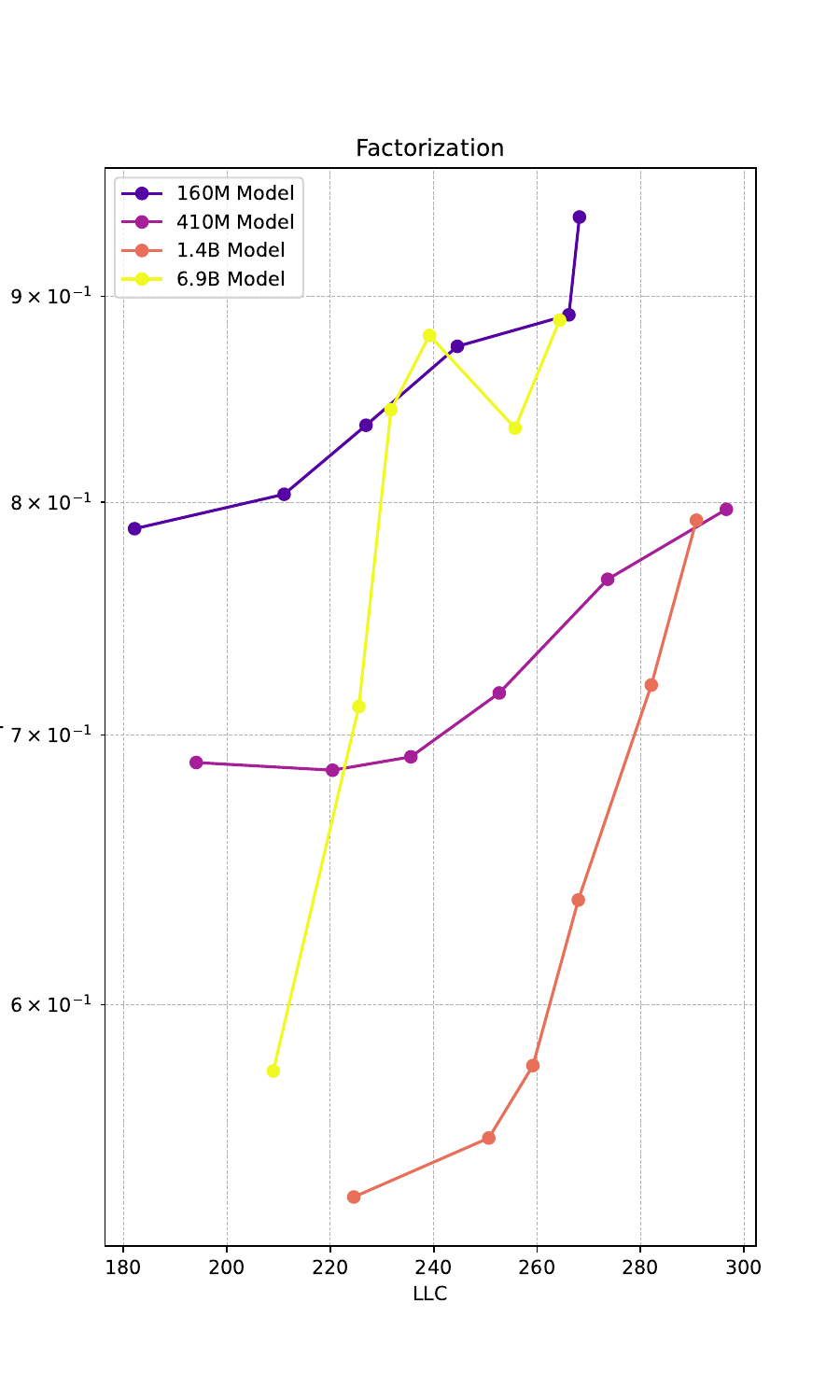}
    \caption{\textbf{Sensitivity of Pythia models of different sizes to factorization.} On the left, we show the loss as a function of the compression fraction, with the black dashed line indicating our choice of loss tolerance $\epsilon=0.5$. The data-points at (1,0) are the models before compression. On the right, we show the critical compression fraction, i.e., the compression fraction for which $\Delta \text{Loss}=\epsilon$ as a function of the LLC.}
    \label{fig:Facotrization compression sensitivity}
\end{figure}

Tensor factorization techniques decompose weight matrices in neural networks into products of smaller matrices, reducing the total number of parameters. We perform the factorization by performing Singlar Value Decomposition on weight matrices $W$, and truncating a fixed fraction of the singular values, leaving the weight matrix approximated as 
\begin{equation}
    W \leftarrow U\times S \times V 
\end{equation}
where $S$ is a diagonal matrix with $n$ diagonal entries. We do this by following the heuristics outlined in~\citet{moar2024characterizingaccuracyefficiency}: We target a selection of layers and factorize all matrices in those layers. We avoid the very last and very first layers, and also avoid factorizing consecutive layers. In the experiments shown in \cref{sec:results}, we avoid factorizing the embedding and unembedding matrices. If $W$ has dimensions $d_1$ by $d_2$, then before factorization the matrix has $d_1\times d_2$ parameters, whereas after factorization it has $d_1\times n + n + n\times d_2$ parameters. 
The reported compression fraction is the ratio between the total number of parameters in the model after and before factorization, i.e.
\begin{equation}
    \text{Compression Fraction} = \frac{\text{\# parameters after factorization}}{\text{\# parameters before factorization}}
\end{equation}
For the smaller Pythia models where the embedding and unembedding matrix dominate the parameter count of the model, the compression fraction is always close to 1. To measure the compressibility of the models under factorization, we find the critical compression fraction, i.e., the value of the compression fraction which causes a loss increase of $\epsilon$. 

In~\cref{fig:Facotrization compression sensitivity}, left side, we show the compression-induced loss increase as a function of compression fraction. To a lesser extent than with quantization, we observe the loss curves featuring a knee at a loss increase of around $\epsilon = 0.5$. For consistency, we stick to the same value of $\epsilon$ for factorization as for quantization. In the \cref{app:factorization}, we show a selection of other $\epsilon$ values, showing that they lead to qualitatively similar results. In the panel to the right, we observe the critical compression fraction largely increasing with increasing LLC, with the exception of Pythia-6.9B where it seems to flat-line at later steps. This might be related to Pythia-6.9B late in training featuring a knee at considerably higher $\epsilon$ values, between 1 and 1.5. 

\label{app:factorization}
\begin{figure}[tbp]
    \centering
    \includegraphics[width=.9\linewidth]{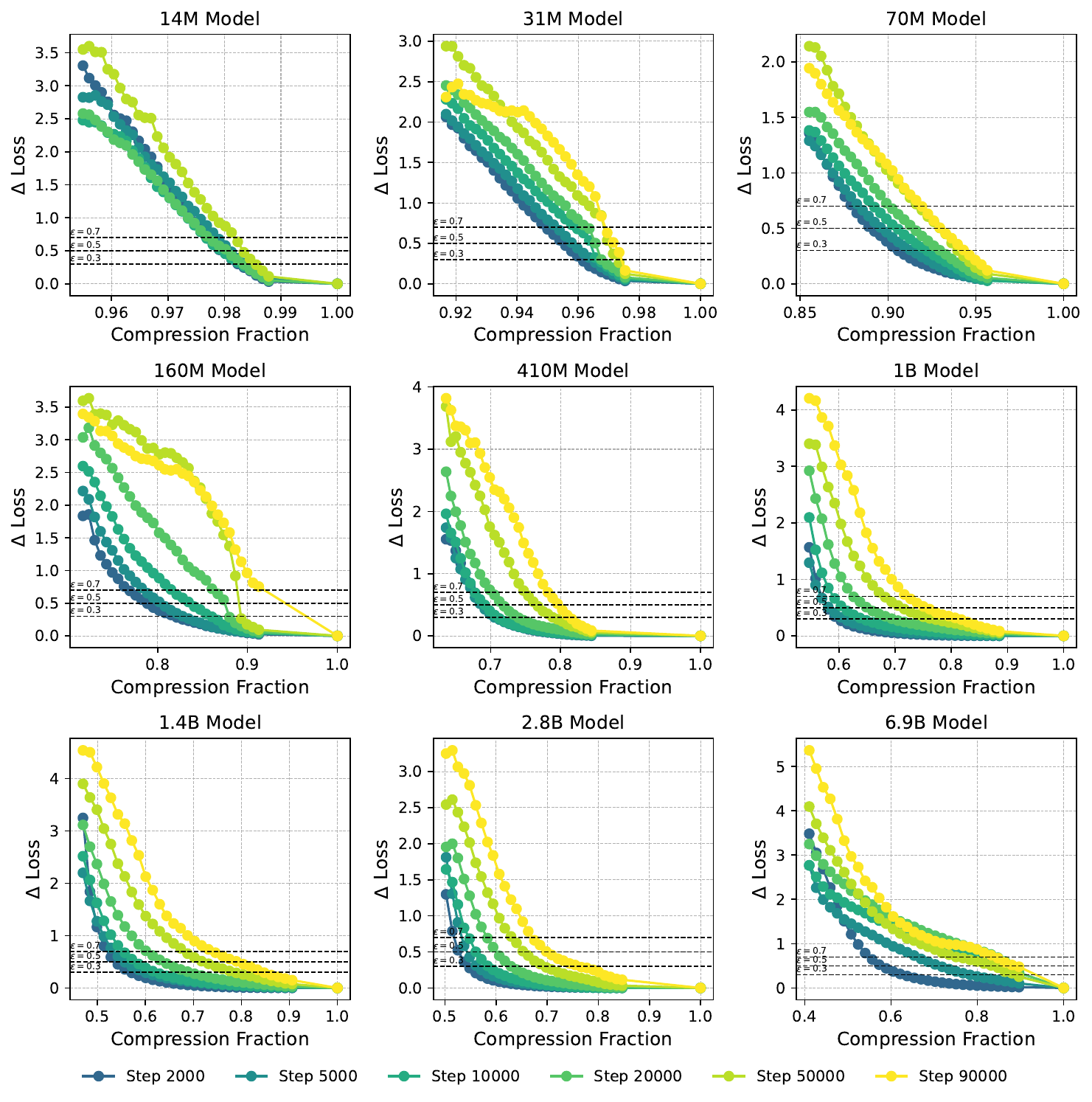}
    \caption{\textbf{Sensitivity of Pythia models to Factorization}. The panels show loss increase as a function of compression fraction. We exclude checkpoint 90000 of Pythia-14M due to believed training instability causing a very large estimated LLC.}
    \label{fig:factorization-all-models}
\end{figure}
\begin{figure}[tbp]
    \centering
    \includegraphics[width=1.\linewidth]{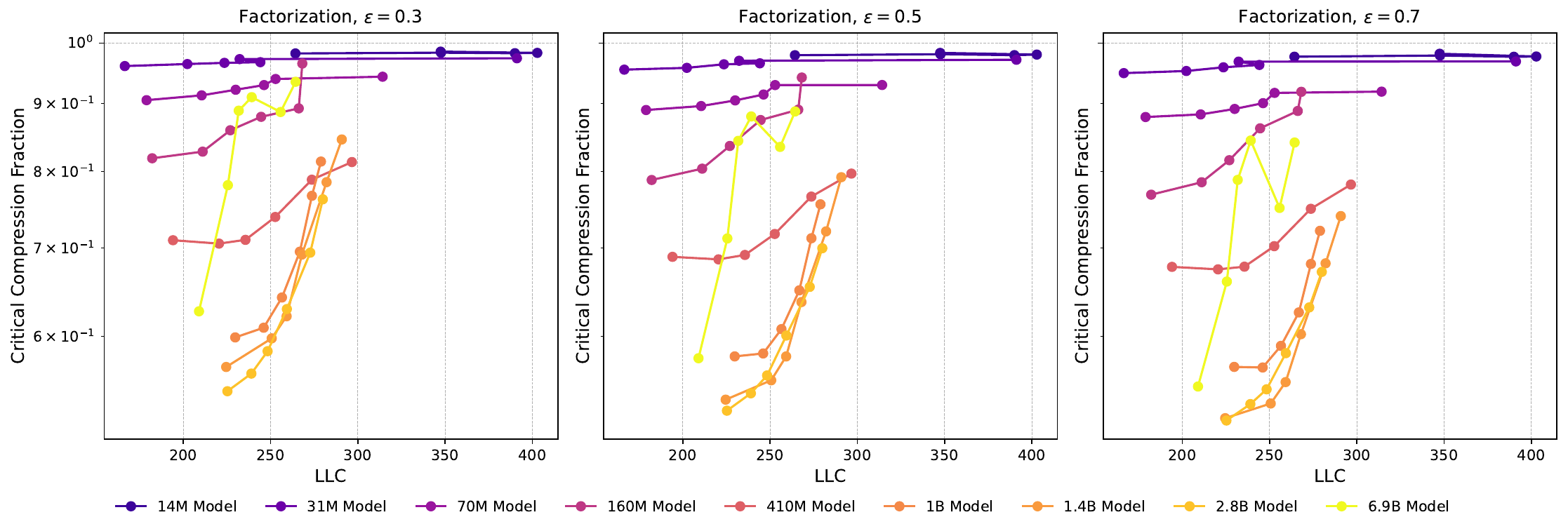}
    \caption{\textbf{Critical compression fraction vs. LLC for different choices of $\boldsymbol{\epsilon}$}. Checkpoint 90000 of Pythia-14M is excluded due to suspected training instability.}
    \label{fig:factorization-epsilon-comparison}
\end{figure}
\begin{figure}[tbp]
    \centering
    \includegraphics[width=1.\linewidth]{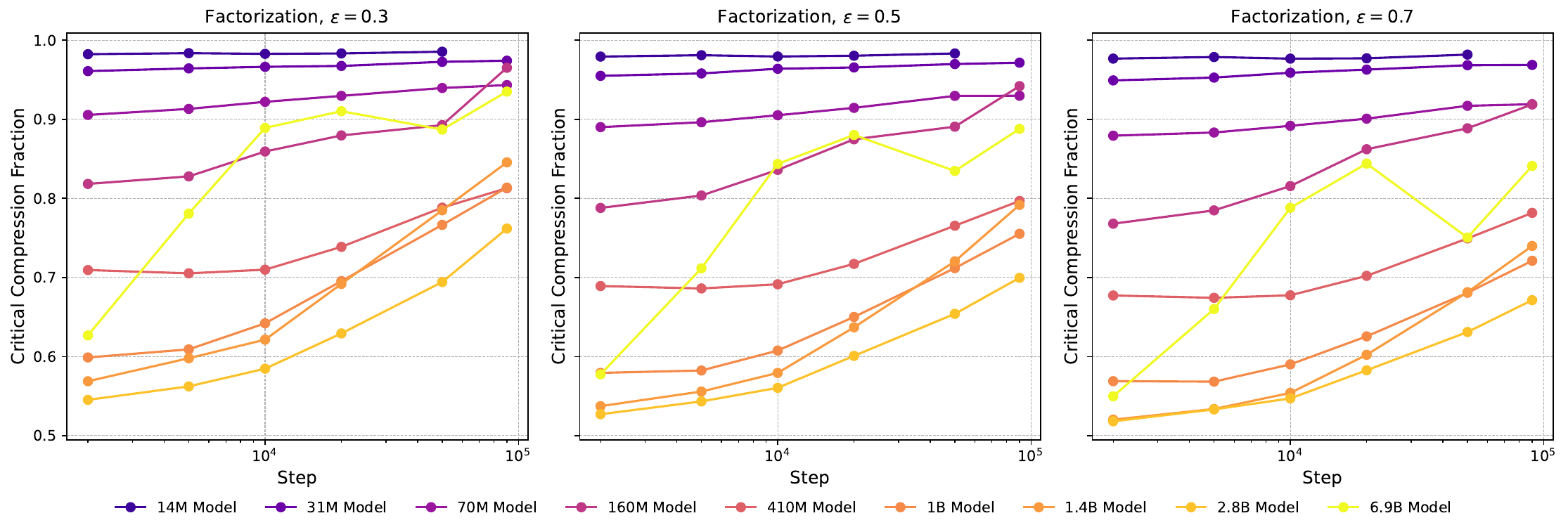}
    \caption{\textbf{Critical compression fraction vs. training step for different choices of $\boldsymbol{\epsilon}$}. Checkpoint 90000 of Pythia-14M is excluded due to suspected training instability.}
    \label{fig:factorization-epsilon-comparison-step}
\end{figure}
In \cref{fig:factorization-all-models} we show the loss increase as a function of compression fraction for all Pythia models up to and including 6.9B. We compare different choices of $\epsilon$ in~\cref{fig:factorization-epsilon-comparison} and~\cref{fig:factorization-epsilon-comparison-step}, and observe the curves being largely $\epsilon$-insensitive. We observe that the critical compression fraction is mostly an increasing function of both LLC and training step. 

\subsection{Quantization without Loss Minimization}
\label{app:quantization-without-loss-minimization}
\begin{figure}[tbp]
    \centering
    \includegraphics[width=.8\linewidth]{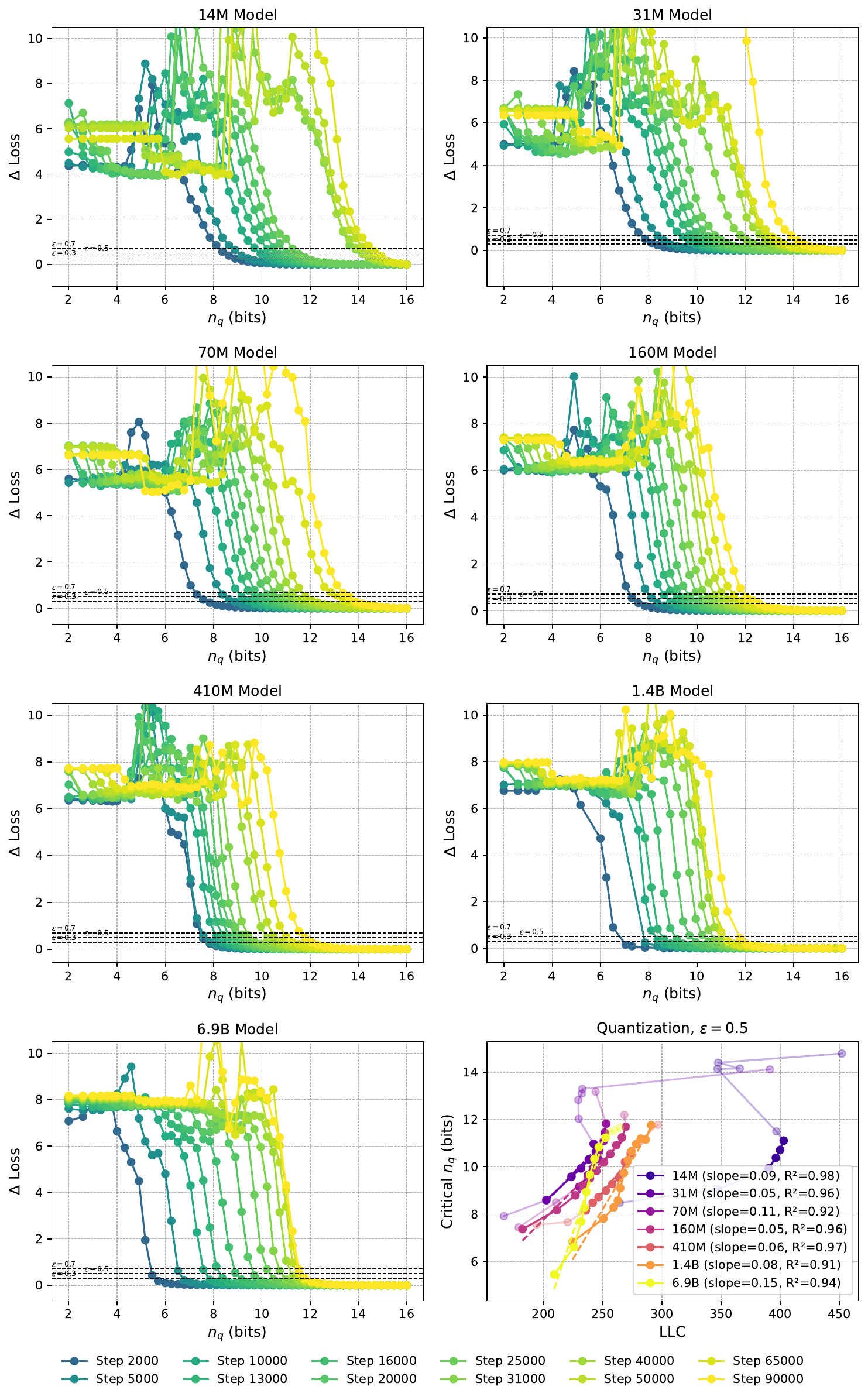}
    \caption{\textbf{Sensitivity of Pythia models to Quantization without loss optimization}. The panels show loss increase as a function of $n_q$, with the lower right panel showing the critical $n_q$ as a function of LLC for $\epsilon = 0.5$. Note that for Pythia-14M, checkpoint 90k is not included due to suspected training instability. Transparent data-points are not included in the linear fits.}
    \label{fig:quantization-all-models-no-loss-minimization}
\end{figure}
\begin{figure}[tbp]
    \centering
    \includegraphics[width=1.\linewidth]{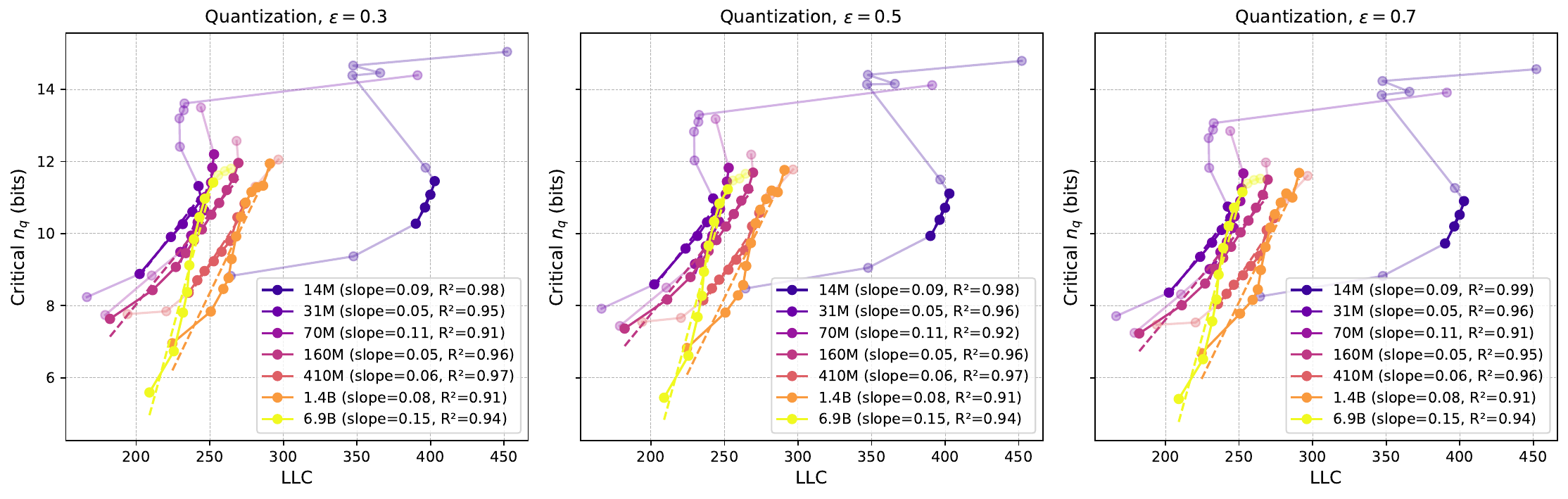}
    \caption{\textbf{Critical $\mathbf{n_q}$ vs. LLC for different $\boldsymbol{\epsilon}$ values for quantization without loss optimization}.  We see that the curves are qualitatively $\epsilon$-insensitive. Note that for Pythia-14M, checkpoint 90k is not included due to suspected training instability. Transparent data-points are not included in the linear fits.}
    \label{fig:quantization-epsilon-comparison-no-loss-minimization}
\end{figure}
\begin{figure}[tbp]
    \centering
    \includegraphics[width=1.\linewidth]{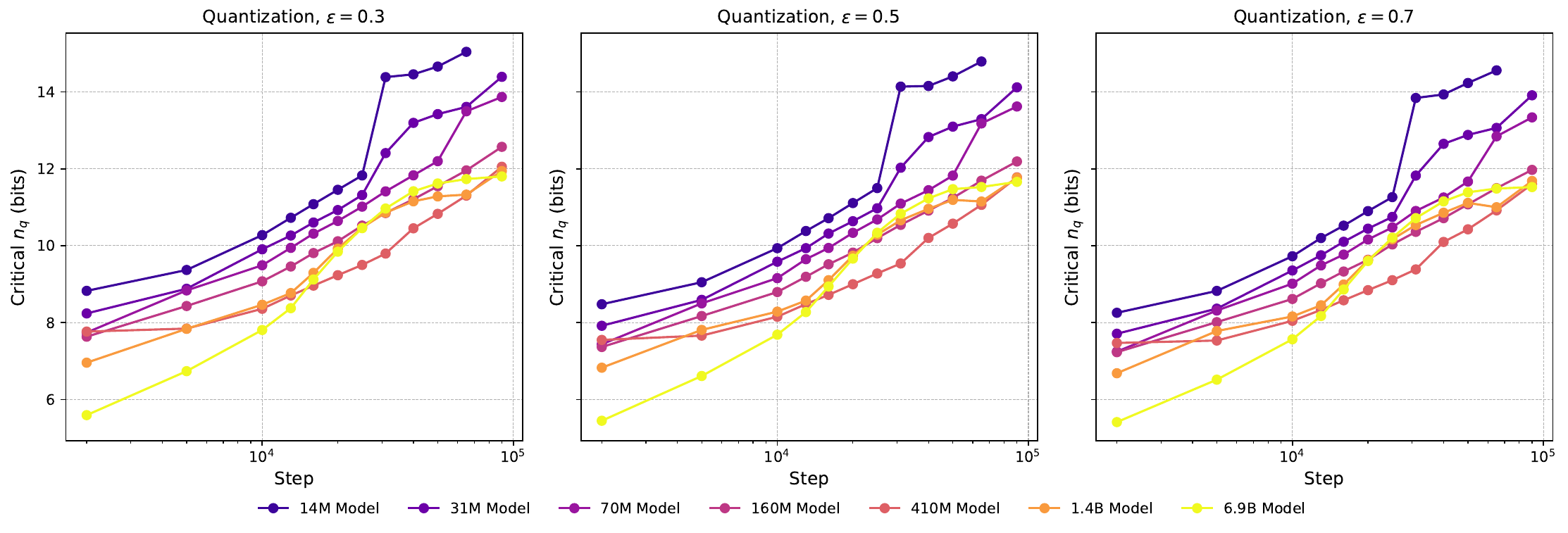}
    \caption{\textbf{Critical $\mathbf{n_q}$ vs. training step for different $\boldsymbol{\epsilon}$ values for quantization without loss optimization}.  We see that the curves are qualitatively $\epsilon$-insensitive. Note that for Pythia-14M, checkpoint 90k is not included due to suspected training instability.}
    \label{fig:quantization-epsilon-comparison-step-no-loss-minimization}
\end{figure}
Here we quantize by setting $m$ to the largest parameter value, rather than selecting it by minimizing the post-quantization loss. We show the loss increase for this form of quantization in~\cref{fig:quantization-all-models-no-loss-minimization}. A comparison of 3 critical $n_q$ for different choices of $\epsilon$ is shown in~\cref{fig:quantization-epsilon-comparison-no-loss-minimization} and~\cref{fig:quantization-epsilon-comparison-step-no-loss-minimization}, and we observe that the curves are largely $\epsilon$-insensitive. We observe that this form of quantization also features critical $n_q$ increasing as a function of LLC, but find worse linear fits for critical $n_q$ vs.\ LLC than we find for quantization with loss minimization in \cref{app:quantization-with-loss-minimization}. This might be because quantization with loss minimization better probes the loss landscape near the $w^*$. 

\subsection{Adding Gaussian Noise}
\label{app:Gaussian Noise}
\begin{figure}[tbp]
    \centering
    \includegraphics[width=.8\linewidth]{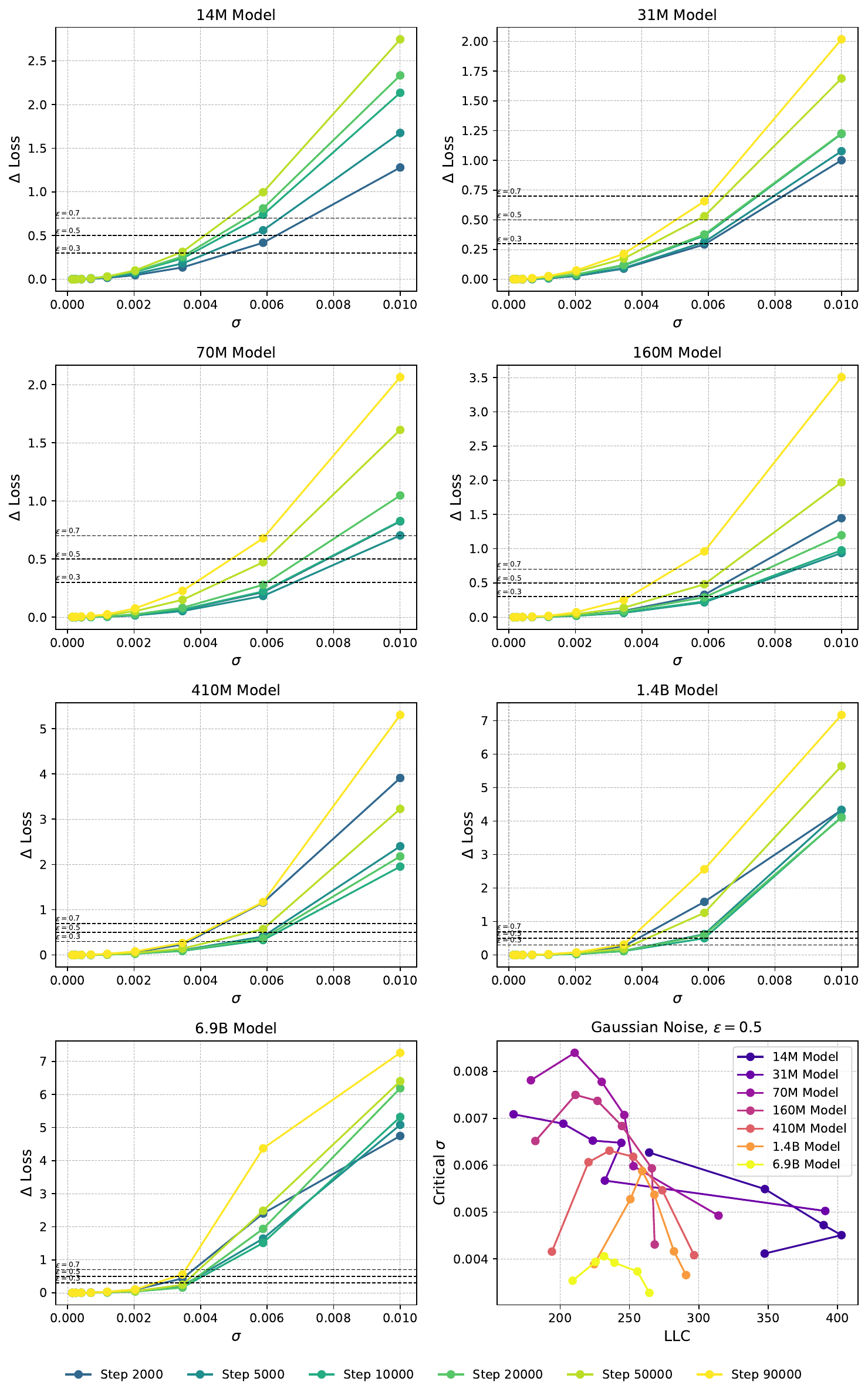}
    \caption{\textbf{Sensitivity of Pythia models to absolute Gaussian noise}. As in the other settings, we exclude checkpoint 90000 of Pythia-14M.}
    \label{fig:gaussian-all-models}
\end{figure}
\begin{figure}[tbp]
    \centering
    \includegraphics[width=.8\linewidth]{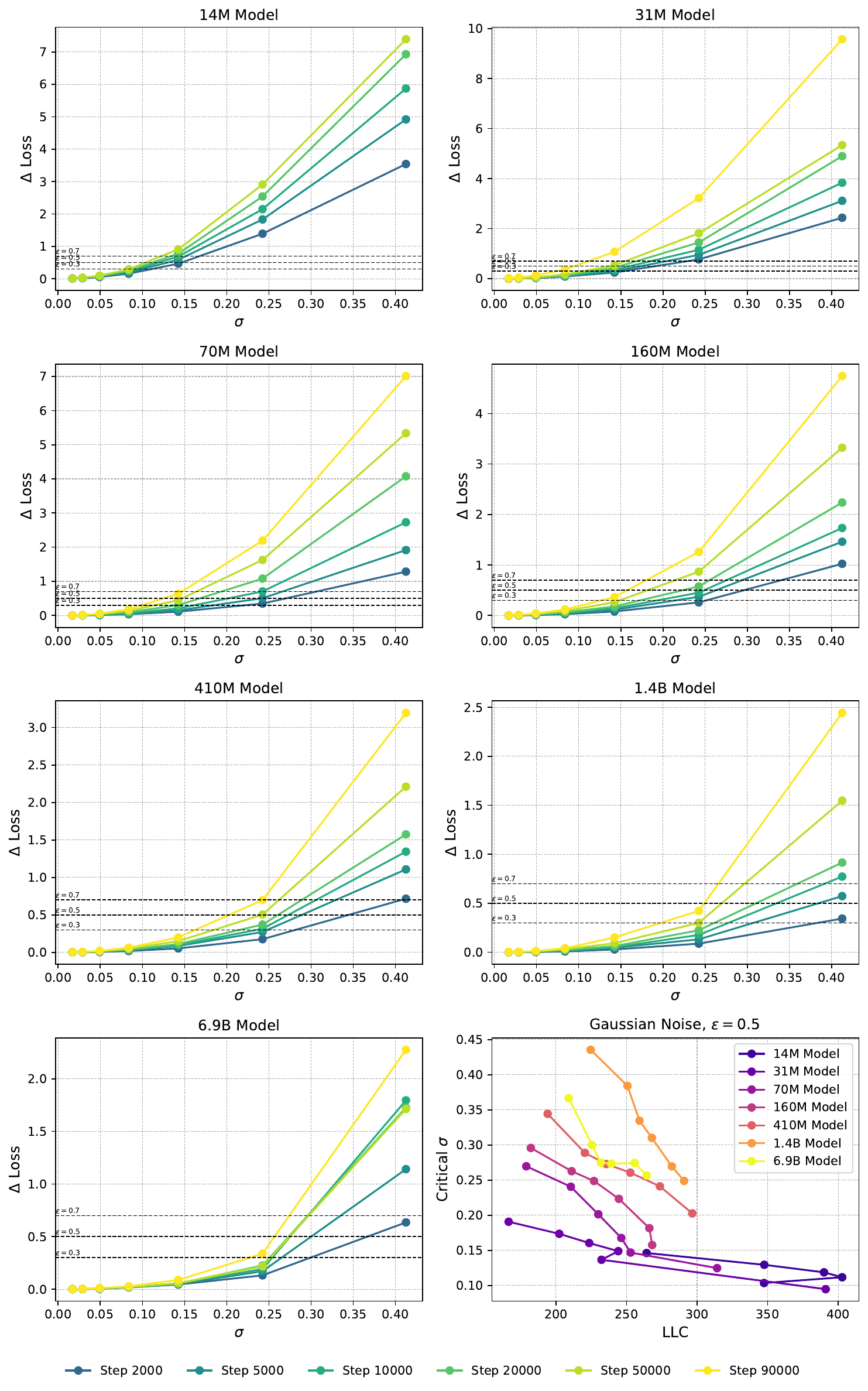}
    \caption{\textbf{Sensitivity of Pythia models to relative Gaussian noise}. As in the other settings, we exclude checkpoint 90000 of Pythia-14M.}
    \label{fig:gaussian-all-models-relative}
\end{figure}
\begin{figure}[tbp]
    \centering
    \includegraphics[width=1.\linewidth]{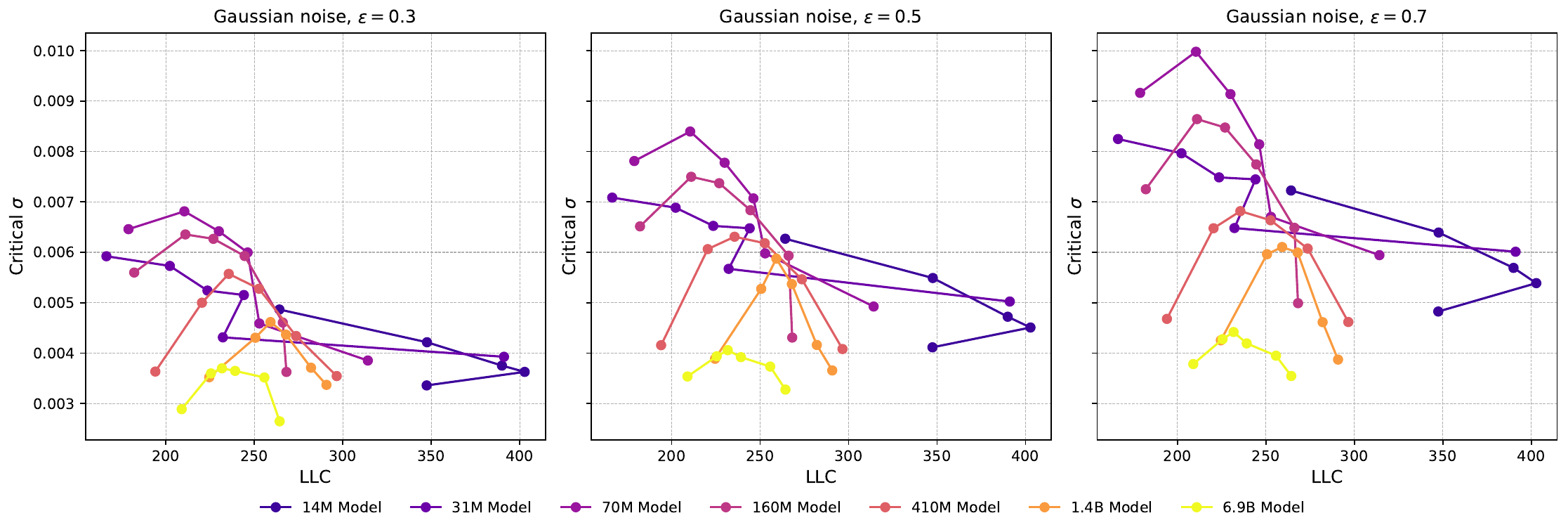}
    \caption{\textbf{Critical $\boldsymbol{\sigma}$ as a function of the LLC for absolute Gaussian noise}.}
    \label{fig:gaussian-epsilon-comparison}
\end{figure}
\begin{figure}[tbp]
    \centering
    \includegraphics[width=1.\linewidth]{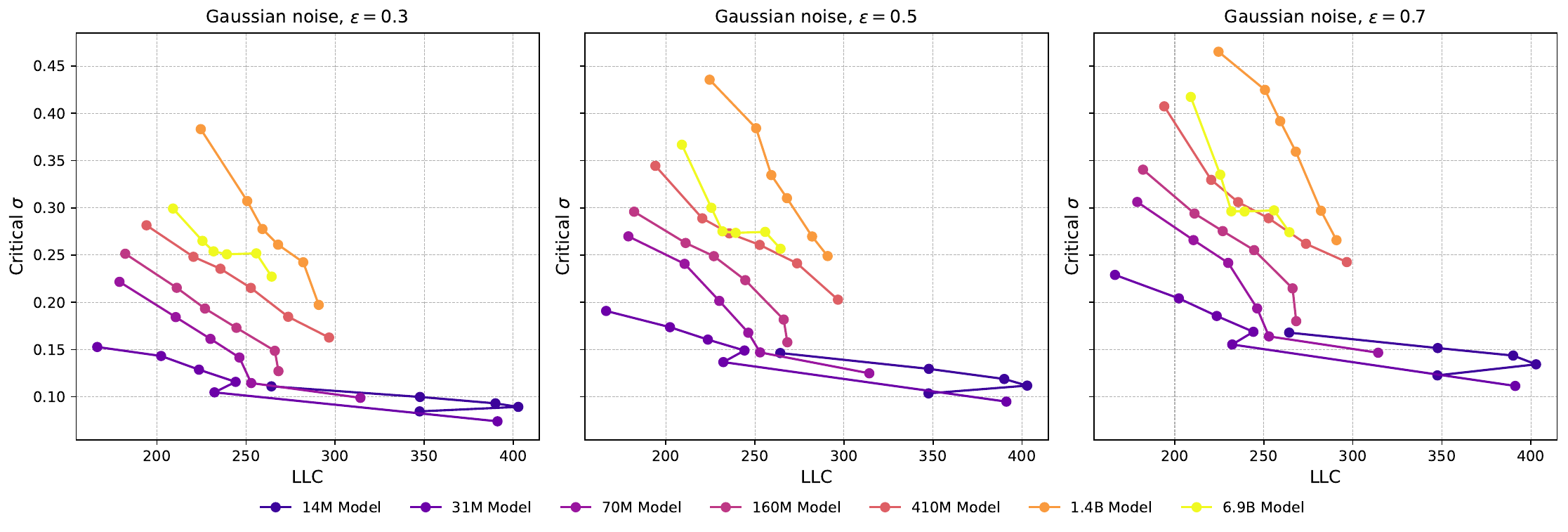}
    \caption{\textbf{Critical $\boldsymbol{\sigma}$ as a function of the LLC for relative Gaussian noise}.}
    \label{fig:gaussian-epsilon-comparison-relative}
\end{figure}
\begin{figure}[tbp]
    \centering
    \includegraphics[width=1.\linewidth]{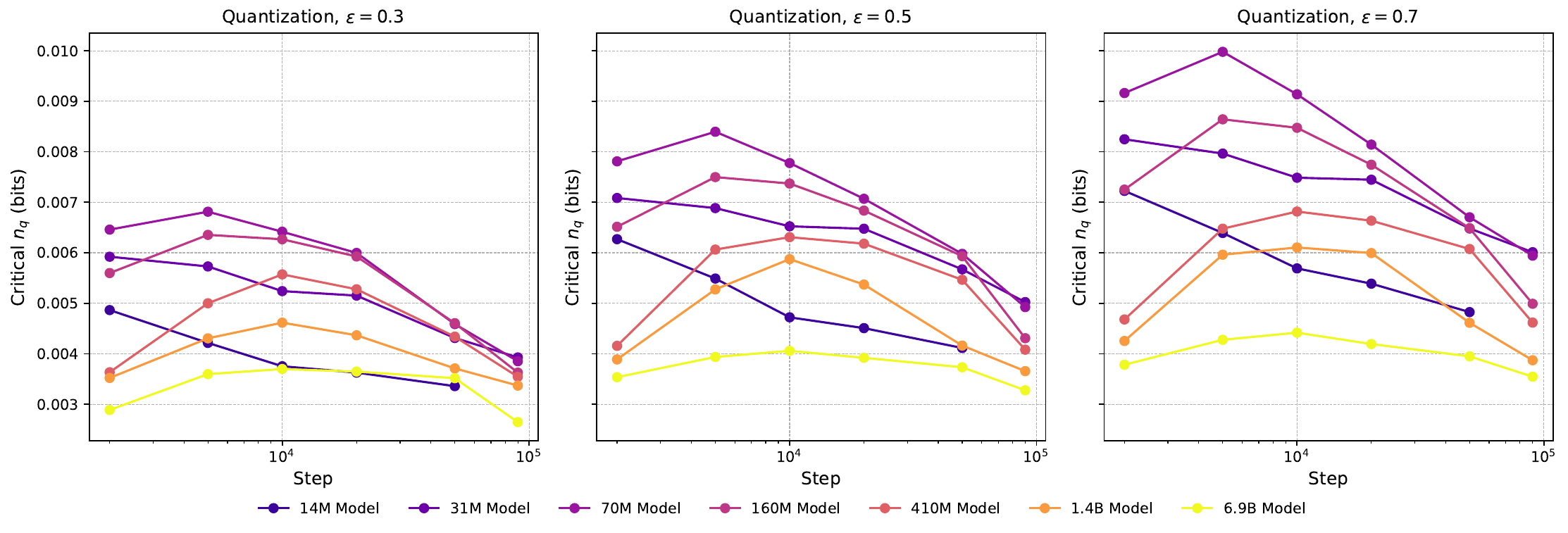}
    \caption{\textbf{Critical $\boldsymbol{\sigma}$ as a function of the training step for absolute Gausian noise}.}
    \label{fig:gaussian-epsilon-comparison-step}
\end{figure}
\begin{figure}[tbp]
    \centering
    \includegraphics[width=1.\linewidth]{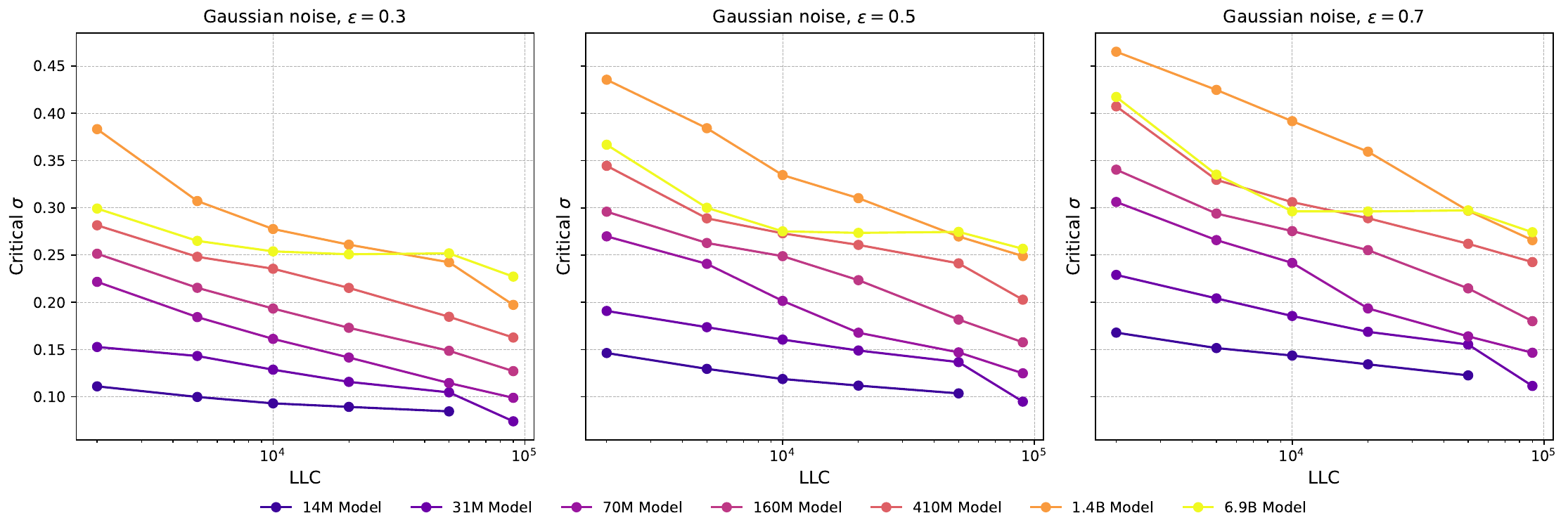}
    \caption{\textbf{Critical $\boldsymbol{\sigma}$ as a function of the training step for relative Gaussian noise}.}
    \label{fig:gaussian-epsilon-comparison-step-relative}
\end{figure}
We have two ways of adding Gaussian noise. The first we call absolute Gaussian noise, and involves updating the parameters of the model according to
\begin{equation}
    w \leftarrow w^* + \sigma N(0,1)\,.
\end{equation}
We use relative Gaussian noise to refer to adding noise proportional to the parameter,
\begin{equation}
    w \leftarrow w^* + w^*\sigma N(0,1)\,.
\end{equation}
In~\cref{fig:gaussian-all-models} and~\cref{fig:gaussian-all-models-relative}, we show the loss increase as a function of $\sigma$ for absolute and relative noise, respectively, with the lower right corner showing the critical $\sigma$ for $\epsilon=0.5$ as a function of LLC. We observe that for addition of relative noise, critical $\sigma$ largely decreases with increasing LLC, as expected. For absolute Gaussian noise, the picture is more complicated, and is probably impacted by the change in magnitude of the model parameters over the course of training. In~\cref{fig:gaussian-epsilon-comparison} and~\cref{fig:gaussian-epsilon-comparison-relative}, we show critical $\sigma$ as a function of LLC for different values of loss tolerance $\epsilon$. We observe that the qualitative shape of the curves are $\epsilon$-insensitive. In~\cref{fig:gaussian-epsilon-comparison-step} and~\cref{fig:gaussian-epsilon-comparison-step-relative} we plot the critical $\sigma$ as a function of training step. We observe a similar relation between the training steps and critical $\sigma$ as between the LLC and critical $\sigma$. Again, we observe that the curves are qualitatively $\epsilon$-insensitive. 

\subsection{Pruning}
\label{app:pruning}
\begin{figure}[tbp]
    \centering
    \includegraphics[width=.9\linewidth]{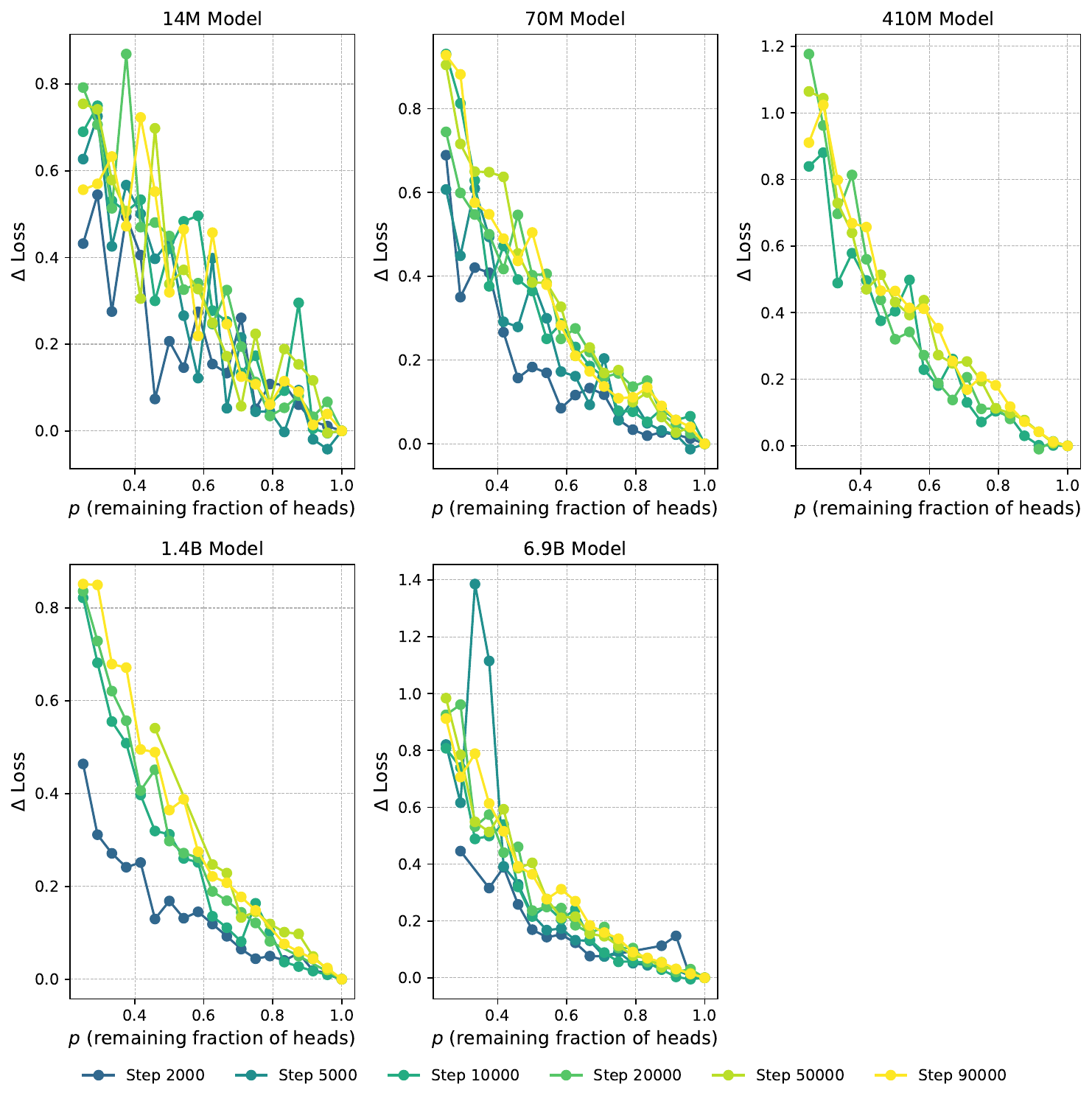}
    \caption{\textbf{Sensitivity of Pythia models to pruning}. The panels shows loss increase as a function of fraction of heads remaining. We here include checkpoint 90000 of Pythia-14M, as we here don't consider critical fractions. }
    \label{fig:pruning-all-models}
\end{figure}
Pruning techniques can be broadly categorized into structured and unstructured approaches. Unstructured pruning involves removing individual weights throughout the network without any regular pattern, potentially achieving higher compression rates but requiring specialized hardware or software to realize computational speedups. Structured pruning, on the other hand, removes entire structured components (e.g., neurons, filters, or attention heads), resulting in models that are inherently smaller and faster on standard hardware.

For our experiments, we focus on structured pruning of attention heads in transformer models. When pruning a model, we first specify a desired fraction of heads to keep $p$. From this, we compute the number of heads to prune $n$ as:
\begin{equation}
    n=\lfloor (1-p) N_h \rfloor\,,
\end{equation}
where $N_h$ is the total number of attention heads in the model. We then select $N_h$ heads at random and set their weight matrices to zero (excluding biases). Following pruning, we implement a retraining phase with the following specifications~\citep{NIPS2015_ae0eb3ee}:

\begin{itemize}
    \item The gradients of the weight matrices in pruned heads are fixed to zero to maintain the pruning structure.
    \item We use a learning rate 1/10th of the one used during initial training.
    \item We retrain for 1000 steps, taking the post-retraining loss to be the minimal training loss during retraining. 
\end{itemize}

In~\cref{fig:pruning-all-models}, we show how the loss changes during pruning of a selection of Pythia models. Since several of these curves are very rugged, we refrain from plotting LLC vs critical values of $p$.

\section{LLC Estimation Details}
\label{app:llc-estimation}
\paragraph{Sanity checks for LLCs.} It was shown in \citet{lau2024local} that variations in training hyperparameters (learning rate, batch size and momentum) affect LLC estimates in the way one would expect for a measure of model complexity. Outside of the limited cases where theoretical values of the LLC for large neural networks are available (principally deep linear networks, \citealt{aoyagi2024}), such experiments serve as a crucial ``sanity check'' on LLC estimates. The experimental results on the effect of compression on the LLC in this paper serve as a complementary set of sanity checks for LLC estimation in models up to 6.9B parameters.

\subsection{Implementation of the LLC Estimator} 

\paragraph{Computational Resources.}
LLC estimation for our largest models required substantial computational resources. For reference, a single LLC estimation for the Pythia-6.9B model required approximately 3.5 hours on an H200 GPU with 141GB memory.

\paragraph{Hyperparameters.}
We estimate the LLC of the Pythia models on the Pile~\citep{gao2020pile}, using the full context of 2048 tokens, with localization $\gamma=300$, inverse temperature $n\beta=30$ and 4 SGLD chains with 200 steps for models smaller than 1B parameters and 100 steps for models equal to or larger than 1B parameters. We use a batch size of 32, and use 8 batches to calculate $L_n(w^*)$. The SGLD learning rate varies with model size, and we use $10^{-3}$ for Pythia-14M, $3\times 10^{-4}$ for Pythia-31M and Pythia-70M, $10^{-4}$ for Pythia-160M and Pythia-410M, $3\times 10^{-5}$ for Pythia-1B and Pythia-1.4B, $10^{-5}$ for Pythia-2.8B and $3\times 10^{-6}$ for Pythia-6.9B. 

\begin{figure}[ptb]
    \centering
    \includegraphics[width=0.7\linewidth]{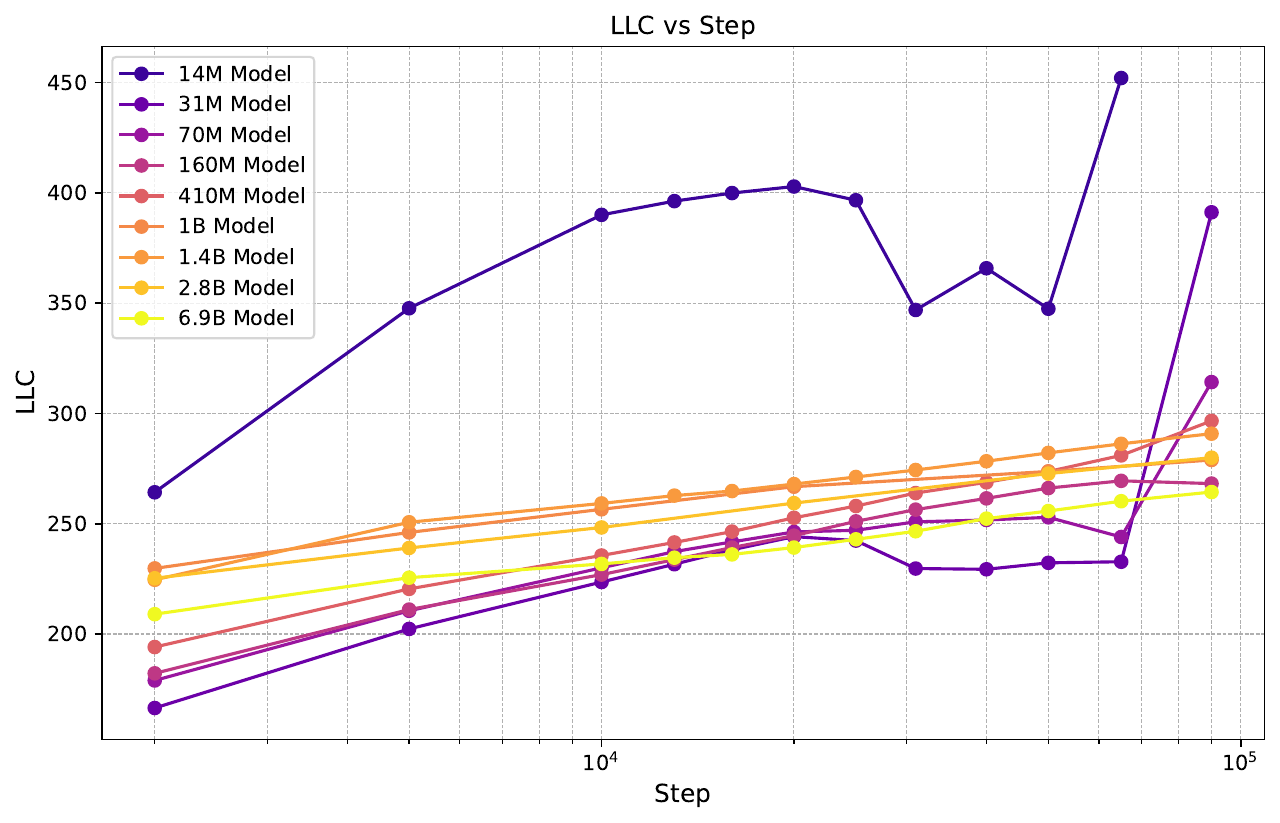}
    \caption{LLC vs. training step for the Pythia models.}
    \label{fig:llc-vs-step}
\end{figure}

\paragraph{Estimated LLCs for Pythia models.}
In \cref{fig:llc-vs-step} we show the LLC as function of training step for the Pythia models. We see that with the exception of Pythia-14M through 70M, the LLC rises smoothly as function training step. 

\subsection{(Challenges in) Estimating the LLC}\label{sec:estimated_vs_true}

The main obstacle to using the LLC in practice as a tool for evaluating compression techniques is that we usually do not have direct access to the true LLC, $\lambda$, but must instead estimate its value, $\hat\lambda$, and these estimates may be systematically biased. Currently, the only scalable approach to estimating LLCs for large neural networks is via gradient-based approximate posterior sampling methods like SGLD \citep{lau2024local}. The resulting estimates have been found in recent years to be useful in practice for understanding the development of neural networks \citep{hoogland2024developmental,wang2024differentiationspecializationattentionheads,carroll2025dynamics,urdshals2025structure}. 

However, while there is a deep mathematical theory behind the definition of the LLC, there are several serious problems with the current state of empirical practice:
\begin{enumerate}
    \item \textbf{There are gaps in the theory of SGLD.} Although there is a theoretical literature \citep{wellingBayesianLearningStochastic2011, chen2015convergence, teh2016consistency}, which provides conditions (for example, decaying step size and long chains) under which gradient-based posterior sampling methods converge weakly to the true posterior for some classes of statistical models, some of the technical conditions in these theorems do not hold for all neural networks. Thus, the theoretical status of SGLD-based estimation is unclear. 
    \item \textbf{We do not fully understand the role of hyperparameters like inverse temperature.} In practice, we know that varying the inverse temperature $\beta$ used for estimation does affect the estimates. In principle, any inverse temperature is valid (since the effect due to the tempering of the posterior should be canceled by the $n\beta$ occurring as a prefactor), but in practice, SGLD-based estimation appears sensitive to this factor. Since the only principled setting is ${1}/{\log n}$ \citep{watanabeWidelyApplicableBayesian2013}, which is too small for stable estimation in our settings, we know that the LLC estimates can, at best, be meaningful \emph{up to whatever effect this variation has on estimates}. \citet{chen2025modes} prove that the temperature acts as a resolution dial on the loss landscape, so that we sample from an effectively truncated posterior, but this effect is not yet fully understood; this explains why we have focused on applications of LLC estimation to a single model with the same hyperparameters across training, under the hypothesis that this effect does not confound comparisons of LLC values at different training timesteps.
    \item \textbf{Unrealistic values for large networks.} SGLD-based LLC estimation can produce accurate estimates for deep linear networks \citep{lau2024local}. Keeping in mind the previous point, the hyperparameters we select for the Pythia suite lead to LLC estimates that are on the order of hundreds, for models with parameter counts ranging from millions to billions. 
\end{enumerate}

\section{Theoretical results for Singular MDL}\label{app:theory results}

\subsection{Assumptions}\label{app: assumptions}
In this section, we list the sufficient conditions for the results discussed in this work to hold. Recall that we have an outcome space $\outcomespace$, data distribution $q \in \Delta(\outcomespace)$ and model $\model = \set{p_w \in \Delta(\outcomespace) : w \in W \subset \R^d}$. 

\paragraph{Finite outcome space. } We assume that the outcome space $\outcomespace$ is finite so that the data distribution, $q$ and distributions $p_w$ in a model are members of the finite dimensional simplex $\Delta(\outcomespace)$. This is a severe restriction stated for expository ease. There isn't any fundamental limitation from relaxing this criterion to the continuous case.

\paragraph{Conditions for SLT. } As we rely heavily on the core result of SLT, we require similar sufficient conditions as stated by \citet[Definition 6.1 and 6.3]{watanabeAlgebraicGeometryStatistical2009} and the relaxation of the realisability assumptions stated in \citet[Chapter 3.1]{watanabe2018}.

Importantly, we require that 
\begin{itemize}
    \item The parameter space is a compact subset of $\R^d$ with non-empty interior. 
    \item The data distribution $q$ satisfies relatively finite variance condition set out in \citet[Definition 7]{watanabe2018}. 
    \item The loss function $w \mapsto (x \mapsto \log \frac{1}{p_w(x)})$ can be extended to a $L^2(q)$-valued complex analytic function. 
\end{itemize}

\paragraph{Uniformly bounded away from boundary.} This is a technical condition that allow us to treat KL-divergence as almost a metric on $\model$. We require that the model we considered to be a subset of the restricted simplex $\mathring{\Delta}_m(\outcomespace)$ for some $m > 0$ defined as follow.

\begin{definition}
Let $\outcomespace$ be a finite set and $m$ be a number $0 < m < \frac{1}{\abs{\outcomespace}}$. We define $\mathring{\Delta}_m(\outcomespace)$ as the set of distributions in the interior of the simplex $\Delta(\outcomespace)$ that is uniformly bounded away from the simplex boundary by $m$. That is
$$
\mathring{\Delta}_m(\outcomespace) := \set{p \in \Delta(\outcomespace) : \min_{x \in \outcomespace} p(x) \geq m}.
$$ 
\end{definition}

\subsection{Lemmas}

\begin{lemma}\label{lemma: kl-square norm equivalence}
Let $0 < m \leq \frac{1}{\abs{\outcomespace}}$ be fixed. There exist constants $c, c'> 0$ such that for any $q, p \in \mathring{\Delta}_m(\outcomespace)$, 
$$
c \norm[p - q]_2^2 \leq \kldiv[q][p] \leq c'\norm[p - q]_2^2
$$    
where $\norm[\cdot]_2$ denotes the $\ell^2$-norm.
\end{lemma}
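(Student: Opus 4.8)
The plan is to establish the two inequalities separately, using only elementary information-theoretic estimates; the membership of $p,q$ in $\mathring{\Delta}_m(\outcomespace)$ will be needed only for the upper bound. Throughout I may assume $\norm[p-q]_2 > 0$, since otherwise $p = q$ and all three quantities vanish.

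For the upper bound I would start from the pointwise inequality $\log t \leq t - 1$ applied to $t = q(x)/p(x)$, which gives
\[
\kldiv[q][p] = \sum_{x \in \outcomespace} q(x) \log \frac{q(x)}{p(x)} \;\leq\; \sum_{x \in \outcomespace} q(x)\left(\frac{q(x)}{p(x)} - 1\right) = \sum_{x \in \outcomespace} \frac{(q(x) - p(x))^2}{p(x)},
\]
where the last equality is the standard identity $\sum_x q(x)^2/p(x) - 1 = \sum_x (q(x)-p(x))^2/p(x)$ (expand the square and use $\sum_x q(x) = \sum_x p(x) = 1$). Since $p(x) \geq m$ for every $x$, the right-hand side is at most $\frac{1}{m}\sum_x (q(x)-p(x))^2 = \frac{1}{m}\norm[p-q]_2^2$, so the upper bound holds with $c' = 1/m$.

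For the lower bound I would invoke Pinsker's inequality, $\kldiv[q][p] \geq \frac{1}{2}\norm[p-q]_1^2$, together with the elementary norm comparison $\norm[\cdot]_1 \geq \norm[\cdot]_2$ on $\R^{\abs{\outcomespace}}$; this immediately yields $\kldiv[q][p] \geq \frac{1}{2}\norm[p-q]_2^2$, i.e.\ the lower bound with $c = 1/2$. Note that this direction uses neither the finiteness of $\outcomespace$ in an essential way nor the lower bound $m$.

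I do not expect a genuine obstacle here: both inequalities invoked are dimension-robust and produce constants uniform over the entire class $\mathring{\Delta}_m(\outcomespace)$ at once. The only point worth flagging is that one should resist proving the lemma by a second-order Taylor expansion of $p \mapsto \kldiv[q][p]$ around $p = q$ (whose Hessian is $\mathrm{diag}(1/q(x))$): while this is morally why the statement is true, making the Taylor remainder estimate uniform over all pairs $q,p$ in the class requires exactly the compactness bookkeeping that the two elementary inequalities above sidestep. Hence I would present the short argument and, at most, remark on the Taylor-expansion heuristic.
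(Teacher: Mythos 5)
Your proof is correct, and it takes a genuinely different route from the paper's. The paper derives both inequalities from a single calculation: it writes $\kldiv[q][p] = \sum_x p(x)\,f(r(x))$ with $r = q/p$ and $f(z) = z\log z - z + 1$, Taylor-expands $f$ at $z=1$ with the Lagrange remainder $\tfrac{1}{2t}(z-1)^2$, and then sandwiches $1/t$ using $\min(p,q) \geq m$ and $\max(p,q) \leq 1$; this yields $\tfrac{1}{2}\norm[p-q]_2^2 \leq \kldiv[q][p] \leq \tfrac{1}{2m}\norm[p-q]_2^2$. You instead split the two directions and invoke off-the-shelf inequalities: Pinsker plus $\norm[\cdot]_1 \geq \norm[\cdot]_2$ for the lower bound (giving the same $c = 1/2$), and $\log t \leq t-1$ (equivalently, $\kldiv[q][p] \leq \chi^2(q \| p)$) with $p(x) \geq m$ for the upper bound (giving $c' = 1/m$, a factor of two looser than the paper's, though this is immaterial for the lemma). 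Your version is shorter and modular, and your remark that the lower bound is free of the $\mathring{\Delta}_m$ restriction is a genuine observation the paper does not make. The paper's version buys self-containment and a tighter upper constant at the cost of some bookkeeping, and its intermediate chi-square-like sandwich $\tfrac{1}{2}\sum_x \frac{(q-p)^2}{\max(q,p)} \leq \kldiv[q][p] \leq \tfrac{1}{2}\sum_x \frac{(q-p)^2}{\min(q,p)}$ is of some independent interest. Both are valid; for the paper's purposes (existence of the constants) your argument would have sufficed.
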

\begin{proof}
    Let $r(x) := \frac{q(x)}{p(x)}$. Note that $r(x) \in [m, 1/m]$. Now, 
    $$
    \kldiv[q][p] = \sum_{x \in \outcomespace} p(x) r(x) \log r(x) = \sum_{x \in \outcomespace} p(x) \brac{r(x)\log r(x) - r(x) + 1}
    $$
    since $\sum_x p(x) r(x) = 1$. Let $f(z) := z\log z - z + 1$. Taylor expanding $f$ at $z = 1$ up to order 2 with Lagrange remainder give us $f(z) = \frac{1}{2t}(z - 1)^2$ for some $t \in (1, z)$. Therefore, 
    $$
    \frac{1}{2 \max(1, z)} (z -1)^2 \leq f(z) \leq \frac{1}{2 \min(1, z)} (z - 1)^2
    $$
    
    Now, from the calculation above, we have $\kldiv[q][p] = \sum_{x \in \outcomespace} p(x) f(r(x))$ and therefore
    \begin{align*}
    & \frac{1}{2} \sum_{x \in \outcomespace} \frac{p(x) \brac{\frac{q(x)}{p(x)} - 1}^2}{\max\brac{1, \frac{q(x)}{p(x)}}} \leq \kldiv[q][p] \leq \frac{1}{2} \sum_{x \in \outcomespace} \frac{p(x)\brac{\frac{q(x)}{p(x)} - 1}^2}{\min\brac{1, \frac{q(x)}{p(x)}}} \\
    \implies & \frac{1}{2} \sum_{x \in \outcomespace} \frac{\brac{q(x) - p(x)}^2}{\max\brac{q(x), p(x)}} \leq \kldiv[q][p] \leq \frac{1}{2} \sum_{x \in \outcomespace} \frac{\brac{q(x)- p(x)}^2}{\min\brac{q(x), p(x)}}
    \end{align*}
    where we have used the fact that $p(x) \min(1, r(x)) = \min(q(x), p(x))$ and $p(x) \max(1, r(x)) = \max(q(x), p(x))$. 
    Finally, $\max_x \max(p(x), q(x)) \leq 1$ and $\min_x \min(p(x), q(x)) \geq m$, we get
    $$
    \frac{1}{2} \norm[p - q]_2^2 \leq \kldiv[q][p] \leq \frac{1}{2m} \norm[p - q]_2^2.
    $$
\end{proof}

The above result allows us to show that the KL-divergence on this restricted space of distribution satisfies a form of triangle inequality. 
\begin{lemma}\label{lemma: kl pseudo triangle inequality}
With the same assumption as Lemma \eqref{lemma: kl-square norm equivalence}, there exist $C > 0$ such that for any $q, p, p' \in \mathring{\Delta}_m(\outcomespace)$
$$
\kldiv[p][p'] \leq C \brac{\kldiv[q][p] + \kldiv[q][p']}.
$$
Since this holds over all $q, p, p'$, the ordering of the arguments for each KL-divergence above does not matter.
\end{lemma}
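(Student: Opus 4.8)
The plan is to reduce the claim to the genuine triangle inequality for the Euclidean norm by using Lemma \eqref{lemma: kl-square norm equivalence} to pass back and forth between KL-divergence and the squared $\ell^2$-distance on the restricted simplex $\mathring{\Delta}_m(\outcomespace)$. Fix the constants $c, c' > 0$ supplied by that lemma; crucially, they depend only on $m$ and $\abs{\outcomespace}$, not on the particular distributions, so every estimate below is uniform over $q, p, p' \in \mathring{\Delta}_m(\outcomespace)$.

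First I would apply the upper bound of Lemma \eqref{lemma: kl-square norm equivalence} to the pair $(p, p')$ to get $\kldiv[p][p'] \leq c' \norm[p - p']_2^2$. Next, the ordinary triangle inequality for $\norm[\cdot]_2$ gives $\norm[p - p']_2 \leq \norm[p - q]_2 + \norm[q - p']_2$, and squaring together with $(a + b)^2 \leq 2a^2 + 2b^2$ yields $\norm[p - p']_2^2 \leq 2\norm[p - q]_2^2 + 2\norm[q - p']_2^2$. Finally I would apply the lower bound of Lemma \eqref{lemma: kl-square norm equivalence} to each of the pairs $(q, p)$ and $(q, p')$: since $\norm[p - q]_2 = \norm[q - p]_2$, the bound $c\norm[p - q]_2^2 \leq \kldiv[q][p]$ rearranges to $\norm[p - q]_2^2 \leq \tfrac{1}{c}\kldiv[q][p]$, and likewise $\norm[q - p']_2^2 \leq \tfrac{1}{c}\kldiv[q][p']$. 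Chaining the three displays produces
$$
\kldiv[p][p'] \leq 2c'\brac{\norm[p - q]_2^2 + \norm[q - p']_2^2} \leq \frac{2c'}{c}\brac{\kldiv[q][p] + \kldiv[q][p']},
$$
so the lemma holds with $C = 2c'/c$.

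For the final sentence of the statement — that the ordering of arguments in each KL term is immaterial — I would note that the entire argument only ever used the symmetric quantities $\norm[p - p']_2^2$, $\norm[p - q]_2^2$, $\norm[q - p']_2^2$ together with the two-sided bounds of Lemma \eqref{lemma: kl-square norm equivalence}, which control $\kldiv[\cdot][\cdot]$ from both sides by the same symmetric squared norm regardless of which argument comes first; hence the same $C$ works for every choice of argument orders, and in particular one may freely relabel which of the three distributions plays the role of the ``base point'' $q$.

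There is no real obstacle here: the only thing to be careful about is that the constants in Lemma \eqref{lemma: kl-square norm equivalence} are indeed uniform over $\mathring{\Delta}_m(\outcomespace)$ (which is exactly why the restriction to distributions bounded away from the boundary was imposed), so that $C$ does not secretly depend on $p, p', q$. Everything else is the standard ``metric up to constants'' manipulation.
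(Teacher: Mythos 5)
Your proposal is correct and follows essentially the same route as the paper: use the upper bound of Lemma~\eqref{lemma: kl-square norm equivalence} to pass from $\kldiv[p][p']$ to $\norm[p-p']_2^2$, apply the $\ell^2$ triangle inequality with $(a+b)^2 \le 2a^2 + 2b^2$, and then use the lower bound to return to KL-divergence. One small remark on bookkeeping: working abstractly with $c, c'$ you correctly obtain $C = 2c'/c$, which with the paper's explicit constants ($c = \tfrac{1}{2}$, $c' = \tfrac{1}{2m}$) evaluates to $C = \tfrac{2}{m}$; the paper's displayed chain uses $\norm[p-q]_2^2 \le \tfrac{1}{2}\kldiv[q][p]$ in place of the correct $\norm[p-q]_2^2 \le 2\kldiv[q][p]$ and so prints $C = \tfrac{1}{2m}$, a harmless slip in the constant that your version avoids.
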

\begin{proof}
Applying the Lemma \eqref{lemma: kl-square norm equivalence}, once in each direction of inequality, together with the fact that $(a + b)^2 \leq 2a^2 + 2b^2$ give
\begin{align*}
    \kldiv[p][p']
    &\leq \frac{1}{2m} \norm[p - p']^2_2 \\
    &\leq \frac{1}{2m}\brac{\norm[p - q]_2 + \norm[q - p']_2}^2 \\
    &\leq \frac{1}{m} \brac{\norm[p - q]_2^2 + \norm[p' - q]^2_2} \\
    &\leq \frac{1}{m} \brac{\frac{1}{2}\kldiv[q][p] + \frac{1}{2} \kldiv[q][p']} \\
    &= \frac{1}{2m} \brac{\kldiv[q][p] + \kldiv[q][p']}
\end{align*}
which is the desired result with $C = \frac{1}{2m}$. We note that $C \geq 1$ whenever $\outcomespace$ has more than 1 element. 
\end{proof}

\begin{lemma}\label{lemma: log ratio variance bound}
Let $q, p \in \Delta(\outcomespace)$ and $M := \sup_x \log \frac{q(x)}{p(x)}$ and $m := \inf_x \log \frac{q(x)}{p(x)}$. Suppose $\abs{m}$ and $M$ are both finite, then there exist constants $c, c' > 0$, independent of $q, p$, such that 
$$
 (c - \kldiv[q][p]) \kldiv[q][p] \leq \mathbb{V}_q \brac{\log \frac{q(x)}{p(x)}} \leq (c' - \kldiv[q][p]) \kldiv[q][p].
$$
\end{lemma}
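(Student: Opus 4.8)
The plan is to reduce the claim to an elementary pointwise comparison of two one-variable functions. Write $L(x) = \log\frac{q(x)}{p(x)}$, so that $\E_q L = \kldiv[q][p]$ and $\mathbb{V}_q(L) = \E_q[L^2] - \kldiv[q][p]^2$. Hence the asserted inequalities $(c-\kldiv[q][p])\kldiv[q][p] \le \mathbb{V}_q(L) \le (c'-\kldiv[q][p])\kldiv[q][p]$ are equivalent to
\[
c\,\kldiv[q][p] \;\le\; \E_q\!\left[L^2\right] \;\le\; c'\,\kldiv[q][p];
\]
that is, the second moment of the log-likelihood ratio is comparable to its first moment (the KL divergence itself). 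Note that finiteness of $m$ and $M$ forces both $p$ and $q$ to be fully supported, so I may write $q(x) = p(x)\, r(x)$ with $r(x) = \frac{q(x)}{p(x)} \in [e^{m}, e^{M}]$.

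Next I would express both sides of the display as $p$-expectations of functions of $r$. Using $\sum_x p(x) r(x) = \sum_x q(x) = 1$ gives $\kldiv[q][p] = \sum_x p(x) f(r(x))$ with $f(z) = z\log z - z + 1$, and directly $\E_q[L^2] = \sum_x p(x) g(r(x))$ with $g(z) = z(\log z)^2$. Both $f$ and $g$ are continuous, nonnegative on $(0,\infty)$, and vanish only at $z = 1$ (for $f$ because $f'=\log$, so $z=1$ is its strict global minimum). Consequently it suffices to prove that there are constants $0 < c \le c'$, depending only on a bound $B \ge \max(|m|,M)$, such that $c\,f(z) \le g(z) \le c'\,f(z)$ for all $z \in [e^{-B}, e^{B}]$: multiplying by $p(x) \ge 0$ and summing over $x$ yields the display, and subtracting $\kldiv[q][p]^2$ gives the lemma.

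The one substantive point is this pointwise two-sided bound on $g/f$. On $[e^{-B}, e^{B}] \setminus \{1\}$ both functions are continuous and strictly positive, so $g/f$ is continuous and positive there; the only issue is the removable singularity at $z=1$, handled by Taylor expansion. Since $f(1)=f'(1)=0$, $f''(1)=1$ and $g(1)=g'(1)=0$, $g''(1)=2$, we get $f(z) = \tfrac12(z-1)^2 + O((z-1)^3)$ and $g(z) = (z-1)^2 + O((z-1)^3)$, hence $g(z)/f(z) \to 2$ as $z \to 1$. Thus $g/f$ extends continuously and strictly positively to the compact interval $[e^{-B}, e^{B}]$, and one simply takes $c = \min_{[e^{-B},e^{B}]} g/f > 0$ and $c' = \max_{[e^{-B},e^{B}]} g/f < \infty$. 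The main obstacle is really just this bookkeeping around $z=1$; once the continuous extension is in hand, compactness finishes the argument.

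Finally, a caveat on the phrase ``independent of $q,p$'': the constants $c,c'$ genuinely depend on the bound $B$ — indeed a two-point $\outcomespace$ example shows $\E_q[L^2]/\kldiv[q][p]$ can be made arbitrarily large by letting $p$ concentrate — so the correct reading is ``uniform over all $q,p$ with $\max(|m|,M) \le B$.'' This is exactly the form needed when the lemma is applied to the restricted simplex $\mathring\Delta_m(\outcomespace)$, on which $B = \log(1/m)$ works uniformly.
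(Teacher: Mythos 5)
Your proof is correct, and it takes a route that is closely related to but genuinely distinct from the paper's. Both proofs first reduce the variance bound to the equivalent two-sided bound $c\,\kldiv[q][p]\le\E_q\!\left[L^2\right]\le c'\,\kldiv[q][p]$ on the second moment. From there the paper works with $\ell=\log(q/p)$ directly: it applies Taylor's theorem with Lagrange remainder to $t\mapsto e^{-t}$ at $t=0$ to obtain the exact pointwise identity $e^{-\ell}+\ell-1=\tfrac{1}{2}e^{-\alpha\ell}\,\ell^2$ (some $\alpha\in(0,1)$), notes that $\E_q\!\left[e^{-\ell}+\ell-1\right]=\kldiv[q][p]$, and then squeezes the weight $\tfrac12 e^{-\alpha\ell}$ between $\tfrac12\min(1,e^{-M})$ and $\tfrac12\max(1,e^{-m})$, yielding the explicit constants $c=2/\max(1,e^{-m})$ and $c'=2/\min(1,e^{-M})$. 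You instead change measure to $p$, rewrite $\kldiv[q][p]=\E_p[f(r)]$ and $\E_q[L^2]=\E_p[g(r)]$ with $f(z)=z\log z-z+1$, $g(z)=z(\log z)^2$, and establish the pointwise comparison $c\le g/f\le c'$ by a continuity-plus-compactness argument on $[e^{-B},e^B]$, invoking the Taylor expansion at $z=1$ only to show the singularity there is removable with $g/f\to 2$. Your approach is more structural (and incidentally aligns with the function $f(z)$ that appears in the paper's Lemma on KL vs.\ squared-norm equivalence), but it gives constants only by existence; the paper's Lagrange-remainder computation produces them explicitly. Your closing caveat is accurate and worth noting: the lemma's phrase ``independent of $q,p$'' cannot be taken literally (your two-point example shows $\E_q[L^2]/\kldiv[q][p]$ is unbounded without a bound on $|m|,M$); it must be read as ``uniform over all $q,p$ with $\max(|m|,M)\le B$,'' which is exactly how the lemma is used on the restricted simplex $\mathring\Delta_m(\outcomespace)$, and is also what the paper's explicit formulas for $c,c'$ (which depend on $m,M$) reflect.
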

\begin{proof}
Let  $\ell(x) = \log \frac{q(x)}{p(x)}$. Using Taylor's theorem with Lagrange remainder, there exist $\alpha \in (0, 1)$ such that
$$
e^{-\ell(x)} + \ell(x) - 1 = \frac{e^{-\alpha \ell(x)}}{2} \ell(x)^2. 
$$

Furthermore, observe that 
\begin{align*}
\E_q \sbrac{e^{-\ell(x)} + \ell(x) -1} 
&= \sum_x q(x) \frac{p(x)}{q(x)} + q(x) \log \frac{q(x)}{p(x)} - q(x) \\
&= \sum_{x} q(x) \log \frac{q(x)}{p(x)} + \sum_x p(x) - q(x) \\
&= \kldiv[q][p]. 
\end{align*}

Combining the above, we have that for some $\alpha \in (0, 1)$
$$
\kldiv[q][p] = \E_q\sbrac{\frac{e^{-\alpha \ell(x)}}{2} \ell(x)^2}.
$$

Given the condition on $\ell(x)$, we have
\begin{align*}
&\frac{1}{2} \min(1, e^{-M}) \leq \frac{1}{2} e^{-\alpha \ell(x)} \leq \frac{1}{2} \max(1, e^{-m})    \\
\implies & \frac{1}{2} \min(1, e^{-M}) \E_q\sbrac{\ell(x)^2}  \leq \kldiv[q][p] \leq \frac{1}{2} \max(1, e^{-m}) \E_q\sbrac{\ell(x)^2} \\
\implies & \frac{2}{\max(1, e^{-m})} \kldiv[q][p] \leq \E_q\sbrac{\ell(x)^2} \leq \frac{2}{\min(1, e^{-M})} \kldiv[q][p].
\end{align*}
Taking $c = 2/\max(1, e^{-m})$ and $c' = 2/\min(1, e^{-M})$, we get
\begin{align*}
& c \kldiv[q][p] \leq \E_q\sbrac{\ell(x)^2} \leq c'\kldiv[q][p] \\
\implies & c \kldiv[q][p] - \kldiv[q][p]^2 \leq \E_q\sbrac{\ell(x)^2}- \kldiv[q][p]^2 \leq c'\kldiv[q][p] - \kldiv[q][p]^2 \\
\implies & (c - \kldiv[q][p]) \kldiv[q][p] \leq \mathbb{V}_q \brac{\log \frac{q(x)}{p(x)}} \leq (c' - \kldiv[q][p]) \kldiv[q][p]
\end{align*}

Note that this implies $\mathbb{V}_q \brac{\log \frac{q(x)}{p(x)}} = O(\kldiv[q][p])$ as $\kldiv[q][p] \to 0$. 
\end{proof}

\subsection{Theorems}
A rather straight forward application of Bernstein inequality together with the variance bound above give the following result.
\begin{theorem}\label{theorem: Kn fluctuation bound}
Let $q, p \in \Delta(\outcomespace)$ and $\dataseq[n]$ be a data sequence of size $n$ drawn i.i.d. from $q$. Define the random variable $K_n := \frac{1}{n} \sum_{i = 1}^n \log \frac{q(x_i)}{p(x_i)}$. Suppose $\max_x \abs{\log \frac{q(x)}{p(x)} - \kldiv[q][p]} \leq M < \infty$ and  $\kldiv[q][p] \leq \frac{c}{n} + o(\frac{1}{n})$ for some constant $c > 0$, then for sufficiently large $n$, 
$$
\mathbb{P}\brac{n \cdot \abs{K_n - \kldiv[q][p]} \geq t} \leq \exp\brac{-\frac{t^2}{C + \frac{1}{3} Mt}}
$$
for some constant $C > 0$ independent of $p, q$. In other words, $n \brac{K_n - \kldiv[q][p]} = O_p(1)$.     
\end{theorem}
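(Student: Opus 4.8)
The plan is to apply the classical Bernstein inequality to the centered, bounded summands $Y_i := \log\frac{q(x_i)}{p(x_i)} - \kldiv[q][p]$. First I would record the elementary facts: since $x_1,\dots,x_n \sim q$ i.i.d., the $Y_i$ are i.i.d.; $\E_q[Y_i] = \E_q\bigl[\log\tfrac{q}{p}\bigr] - \kldiv[q][p] = 0$; the hypothesis $\max_x\abs{\log\tfrac{q(x)}{p(x)} - \kldiv[q][p]} \le M$ gives the uniform bound $\abs{Y_i}\le M$; and $\sum_{i=1}^n Y_i = nK_n - n\kldiv[q][p]$, so the event in the theorem is exactly $\bigl\{\abs{\sum_{i=1}^n Y_i} \ge t\bigr\}$.

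The second step bounds the per-sample variance $\sigma^2 := \mathbb{V}(Y_i) = \mathbb{V}_q\bigl(\log\tfrac{q}{p}\bigr)$ via Lemma~\eqref{lemma: log ratio variance bound}. Its hypotheses hold because $\kldiv[q][p]\le\tfrac{c}{n}+o(1/n)<\infty$ forces $\sup_x\log\tfrac{q}{p}$ and $\inf_x\log\tfrac{q}{p}$ to be finite; and for $n$ large enough $\kldiv[q][p]\le 1$, so both lie in $[-M-1,\,M+1]$ uniformly in $p,q$. Hence the constant $c'$ of that lemma may be taken to be $2e^{M+1}$, depending only on $M$, and (using $\kldiv[q][p]\ge 0$)
\[
\sigma^2 \;\le\; (c' - \kldiv[q][p])\,\kldiv[q][p] \;\le\; c'\,\kldiv[q][p] \;\le\; c'\Bigl(\tfrac{c}{n}+o(1/n)\Bigr),
\]
so $n\sigma^2 \le C_0$ for all large $n$, with $C_0$ depending only on $M$ and $c$ and not on $p,q$.

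The third step is the Bernstein tail bound for a sum $S_n=\sum_{i=1}^n Y_i$ of bounded, independent, mean-zero variables,
\[
\mathbb{P}\bigl(\abs{S_n}\ge t\bigr) \;\le\; 2\exp\!\brac{-\frac{t^2/2}{\,n\sigma^2 + Mt/3\,}} \;\le\; 2\exp\!\brac{-\frac{t^2/2}{\,C_0 + Mt/3\,}},
\]
and since $\abs{S_n}=n\abs{K_n-\kldiv[q][p]}$ this is already a bound of the asserted type; folding the leading constant $2$ and the exponent's $\tfrac12$ into a renamed constant $C$ (a routine adjustment) produces the displayed inequality, and $n(K_n-\kldiv[q][p])=O_p(1)$ follows because the right-hand side $\to 0$ as $t\to\infty$ uniformly in $n$.

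As the paper remarks, the calculation is essentially mechanical; the one point warranting care — and what I would flag as the main (minor) obstacle — is showing the constant $C$ can be taken uniform in $p$ and $q$. This is exactly what the $\tfrac{c}{n}+o(1/n)$ hypothesis buys: it forces $\kldiv[q][p]\to 0$, which simultaneously keeps $n\sigma^2$ bounded rather than growing and confines $\log\tfrac{q}{p}$ to a fixed compact interval, so that the variance-lemma constant no longer depends on the a priori $(p,q)$-dependent extremes of the log-ratio; the same $o(1/n)$ slack is what necessitates the ``for sufficiently large $n$'' qualifier.
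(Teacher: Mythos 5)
Your proof follows essentially the same route as the paper's: apply Bernstein to the centered summands $\log\tfrac{q(x_i)}{p(x_i)} - \kldiv[q][p]$, bound the variance via Lemma~\ref{lemma: log ratio variance bound}, and use $\kldiv[q][p] \le \tfrac{c}{n} + o(1/n)$ to keep $n\sigma^2$ bounded. You are somewhat more careful than the paper about making the variance-lemma constant uniform in $p,q$ and about the prefactor $2$ and the $\tfrac12$ in Bernstein; as a consequence your final exponent has $\tfrac{2}{3}Mt$ rather than the stated $\tfrac13 Mt$, so the ``fold into $C$'' step does not literally reproduce the displayed inequality, but this is cosmetic and the $O_p(1)$ conclusion is intact.
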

\begin{proof}
We apply Bernstein inequality on the centered random variable $X_i = \log \frac{q(x_i)}{p(x_i)} - \kldiv[q][p]$ with norm bounded by $M$ to get
$$
\mathbb{P}\brac{\sum_{i = 1}^n X_i \geq t} \leq \exp\brac{-\frac{t^2}{\sum_{i} \E_q[X_i^2] + \frac{1}{3} Mt}}.
$$
Unpacking definition, we get
$$
\mathbb{P}\brac{n\brac{K_n - \kldiv[q][p]} \geq t} \leq \exp\brac{-\frac{t^2}{n \mathbb{V}_q\brac{\log \frac{q(x)}{p(x)}} + \frac{1}{3} Mt}}.
$$

Using the result from Lemma \eqref{lemma: log ratio variance bound}, we get know that for sufficiently large $n$ there exist $c' > 0$ such that 
$$
\mathbb{V}_q\brac{\log \frac{q(x)}{p(x)}} \leq \frac{c'}{2} \kldiv[q][p] \leq \frac{c c'}{2n}. 
$$
Choose $C = cc'/2$ and we get the required result. We can apply the same argument to $X'_i := -X_i$ to get the lower tail bound. 
\end{proof}

\begin{theorem}\label{theorem: volume bounds}
    Let $\model = \set{p_w \in \mathring{\Delta}_m(\outcomespace) : w \in W \subset \R^d}$ be a model consisting of distributions with uniform lower bound $m > 0$. There exist constant $C > 0$ such that for any $\epsilon > 0$ and any $q, p^* \in \model$ satisfying $\kldiv[q][p^*] \leq \epsilon$ the following holds
    $$
    \set{w \in W : \kldiv[q][p_w] \leq \epsilon} \subseteq \set{w \in W: \kldiv[p_w][p^*] \leq C\epsilon} \subseteq \set{w \in W: \kldiv[q][p_w] \leq \frac{C}{2}(C + 1)\epsilon}.
    $$
\end{theorem}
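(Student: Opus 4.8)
The plan is to reduce the whole statement to the quasi-triangle inequality for KL-divergence on the restricted simplex, \cref{lemma: kl pseudo triangle inequality}, which supplies a constant $C_0 > 0$ (with $C_0 \geq 1$ since $\abs{\outcomespace} > 1$) such that $\kldiv[a][b] \leq C_0\bigl(\kldiv[c][a] + \kldiv[c][b]\bigr)$ for all $a,b,c \in \mathring{\Delta}_m(\outcomespace)$, in any ordering of the arguments. Because $q, p^* \in \model$ and $p_w \in \mathring{\Delta}_m(\outcomespace)$ for every $w \in W$, this inequality is available for the triple $(q, p_w, p^*)$, and each of the two inclusions follows from a single application of it. The only thing requiring care is that the same constant $C$ must appear on both sides of the chain; I will take $C := 2C_0$ (so $C \geq 2$), which is exactly the choice that threads the needle.

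For the first inclusion, fix $w$ with $\kldiv[q][p_w] \leq \epsilon$. Applying \cref{lemma: kl pseudo triangle inequality} with pivot $q$ and then the hypothesis $\kldiv[q][p^*] \leq \epsilon$,
\[
\kldiv[p_w][p^*] \leq C_0\bigl(\kldiv[q][p_w] + \kldiv[q][p^*]\bigr) \leq C_0(\epsilon + \epsilon) = 2C_0\,\epsilon = C\epsilon,
\]
so $w \in \set{w' \in W : \kldiv[p_{w'}][p^*] \leq C\epsilon}$. For the second inclusion, fix $w$ with $\kldiv[p_w][p^*] \leq C\epsilon$. Applying \cref{lemma: kl pseudo triangle inequality} again with pivot $p^*$, and using its order-invariance remark so that $\kldiv[q][p^*]$ may stand in for $\kldiv[p^*][q]$,
\[
\kldiv[q][p_w] \leq C_0\bigl(\kldiv[q][p^*] + \kldiv[p_w][p^*]\bigr) \leq C_0(\epsilon + C\epsilon) = C_0(C+1)\epsilon = \tfrac{C}{2}(C+1)\epsilon,
\]
where the last step uses $C_0 = C/2$. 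This is precisely the claimed containment, completing the proof.

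The argument is essentially immediate, so there is no genuine obstacle; the one point one must get right is the constant bookkeeping. Choosing $C = 2C_0$ is what makes the $C\epsilon$-bound produced by the first inclusion feed correctly into the second and reproduce the stated final radius $\tfrac{C}{2}(C+1)\epsilon$ rather than a larger one. A secondary subtlety is that \cref{lemma: kl pseudo triangle inequality} must be invoked with the arguments in the two particular orderings used above, which is legitimate only because of that lemma's order-invariance, ultimately traceable to the symmetry of $\norm[p - q]_2$ in \cref{lemma: kl-square norm equivalence}.
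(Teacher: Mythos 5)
Your proof is correct and follows essentially the same route as the paper: both apply \cref{lemma: kl pseudo triangle inequality} once per inclusion, pivoting on $q$ for the first and on $p^*$ for the second, and both set $C = 2C_0$ (the paper writes $C'$ for your $C_0$) so the constants thread through identically.
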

\begin{proof}
    Applying  Lemma \eqref{lemma: kl pseudo triangle inequality} gives us a constant $C' > 0$ such that
    $$
    \kldiv[p][p'] \leq C' \brac{\kldiv[p''][p] + \kldiv[p''][p]}
    $$
    for any $p, p', p'' \in \model$. 

    Now, set $C = 2C'$. Let $\epsilon > 0$, $q, p^* \in \model$ be given such that $\kldiv[q][p^*] \leq \epsilon$. For any given $w \in W$ such that $\kldiv[q][p_w] \leq \epsilon$, we get
    $$
    \kldiv[p_w][p^*] \leq C' \brac{\kldiv[q][p_w] + \kldiv[q][p^*]} \leq C' \brac{\epsilon + \epsilon} \leq 2C' \epsilon = C\epsilon.
    $$
    This proves the first inclusion. 

    Similarly, whenever $\kldiv[p_w][p^*] \leq C \epsilon$
    $$
    \kldiv[q][p_w] \leq C' \brac{\kldiv[p_w][p^*] + \kldiv[q][p^*]} \leq C'(C\epsilon + \epsilon) =  \frac{C}{2}(C + 1) \epsilon.
    $$
    This proves the second inclusion.  
    
\end{proof}

\begin{theorem}\label{theorem: mle is order 1/n}
    Let $\model = \set{p_w \in \mathring{\Delta}_m(\outcomespace) : w \in W \subset \R^d}$ be a model consisting of distributions with uniform lower bound $m > 0$ and $q$ be a data distribution in $\model$ (realizable). Given any $c > 0$ and $n \in \N$, we suppose there exist sets $Q_n \subset \model$ such that for every $p \in \model$ there exist $p^* \in Q_n$ with $\kldiv[p][p^*] \leq \epsilon_n$ where $\epsilon_n = O(\frac{1}{n})$. Given any i.i.d. samples $\dataseq[n] \sim q$ of size $n$, let $\hat{p} = \arg \min_{p \in \model} \log \frac{1}{p(x_i)}$ be the maximum likelihood hypothesis in $\model$ and define $p^*_n \in Q_n$ be the closest grid point to $\hat{p}$, i.e. $\kldiv[\hat{p}][p^*_n] = \min_{p \in Q_n} \kldiv[\hat{p}][p] \leq \frac{c}{n}$. Then the random variable $\kldiv[q][p^*_n]$ is satisfies 
    $$
    r_n \leq \kldiv[q][p^*_n] \leq R_n
    $$
    for some sequences of random variables $r_n$ and $R_n$ that are both of order $O_p\brac{\frac{1}{n}}$. 
    Furthermore, $n\kldiv[q][p^*_n]$ is bounded with high probability. 
\end{theorem}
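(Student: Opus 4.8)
The plan is to funnel everything through a single estimate — that the maximum likelihood hypothesis $\hat p$ satisfies $\kldiv[q][\hat p] = O_p(1/n)$ — and then transfer this to the grid point $p^*_n$ using the approximate triangle inequality of \cref{lemma: kl pseudo triangle inequality} together with the standing bound $\kldiv[\hat p][p^*_n] \le c/n$, which is just the $\epsilon_n$-net property of $Q_n$ applied at $\hat p$. Since $\outcomespace$ is finite, I would first rewrite the log-likelihood in terms of the empirical distribution $\hat q_n$ of $\dataseq[n]$: from $\sum_{i=1}^n \log\tfrac{1}{p(x_i)} = n\brac{\entropy(\hat q_n) + \kldiv[\hat q_n][p]}$ one reads off that $\hat p$ is the reverse information projection $\arg\min_{p \in \model} \kldiv[\hat q_n][p]$, which exists by compactness of $\model$ and continuity of $p \mapsto \kldiv[\hat q_n][p]$ on $\mathring{\Delta}_m(\outcomespace)$. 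Realizability, $q \in \model$, then yields the one-line inequality $\kldiv[\hat q_n][\hat p] \le \kldiv[\hat q_n][q]$.

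Next I would control the right-hand side by elementary multinomial concentration. Using $\kldiv[\hat q_n][q] \le \chi^2(\hat q_n \,\|\, q) \le \tfrac1m \norm[\hat q_n - q]_2^2$ (the last step because $q(x) \ge m$) together with $\E\, \norm[\hat q_n - q]_2^2 \le \tfrac1n$, Markov's inequality gives $\kldiv[\hat q_n][q] = O_p(1/n)$, hence $\kldiv[\hat q_n][\hat p] = O_p(1/n)$. To transfer this to $q$, I would restrict to the event $E_n := \set{\min_{x} \hat q_n(x) \ge m/2}$, which has probability tending to $1$ because $q(x) \ge m$ for every $x$; on $E_n$ both $\hat q_n$ and $\hat p$ lie in $\mathring{\Delta}_{m/2}(\outcomespace)$, so \cref{lemma: kl-square norm equivalence} gives $\norm[\hat q_n - \hat p]_2^2 = O_p(1/n)$. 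Combined with $\norm[\hat q_n - q]_2 = O_p(1/\sqrt n)$ and the $\ell^2$ triangle inequality, this yields $\norm[q - \hat p]_2 = O_p(1/\sqrt n)$, and one last application of \cref{lemma: kl-square norm equivalence} (now with $q, \hat p \in \mathring{\Delta}_m(\outcomespace)$, where no restriction to $E_n$ is needed) delivers $\kldiv[q][\hat p] = O_p(1/n)$ and, symmetrically, $\kldiv[\hat p][q] = O_p(1/n)$.

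To finish, \cref{lemma: kl pseudo triangle inequality} furnishes a constant $C > 0$ with $\kldiv[q][p^*_n] \le C\brac{\kldiv[\hat p][q] + \kldiv[\hat p][p^*_n]} \le C\brac{O_p(1/n) + c/n}$, so $n\, \kldiv[q][p^*_n] = O_p(1)$ — precisely the ``bounded with high probability'' assertion. One then takes $R_n := C\brac{\kldiv[\hat p][q] + \kldiv[\hat p][p^*_n]}$, an explicit $O_p(1/n)$ random variable, and $r_n := 0$; if a nontrivial lower envelope is wanted, running the same pseudo-triangle inequality in the opposite direction (and using that $\kldiv[\hat p][p^*_n]$ and $\kldiv[p^*_n][\hat p]$ are comparable by \cref{lemma: kl-square norm equivalence}) produces an $r_n$ of the form $a\, \kldiv[\hat p][q] - b/n$ for constants $a, b > 0$, again $O_p(1/n)$.

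I expect the one genuinely delicate point to be the bookkeeping in the second and third steps rather than anything conceptual: \cref{lemma: kl-square norm equivalence} and \cref{lemma: kl pseudo triangle inequality} are stated for the closed restricted simplex $\mathring{\Delta}_m(\outcomespace)$, whereas the empirical distribution $\hat q_n$ need not lie there for small $n$, so the argument must be confined to the event $E_n$ and one must verify that the constants — which depend only on the uniform lower bound — merely degrade from $m$ to $m/2$ without breaking any estimate. In the non-finite setting alluded to in Appendix~\eqref{app: assumptions}, this reduction would instead rest on a uniform empirical-process bound over the resolution space in the sense of \citet{watanabeAlgebraicGeometryStatistical2009} (where the relatively finite variance condition enters); under the finite outcome-space assumption in force here the multinomial bound above suffices.
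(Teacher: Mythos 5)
Your proposal is correct, and it is genuinely different from the paper's proof in one central respect: where the paper simply \emph{cites} \citet[Main Theorem 6.4]{watanabeAlgebraicGeometryStatistical2009} to assert that $n\,\kldiv[q][\hat p]$ converges in distribution to a finite random variable (and hence $\kldiv[q][\hat p] = O_p(1/n)$), you \emph{derive} this estimate from scratch by an elementary empirical-process argument. Your route --- rewriting the negative log-likelihood as $n(\entropy(\hat q_n) + \kldiv[\hat q_n][p])$ so that $\hat p$ is the reverse $I$-projection of $\hat q_n$, using realizability to get $\kldiv[\hat q_n][\hat p] \le \kldiv[\hat q_n][q]$, bounding $\kldiv[\hat q_n][q]$ by the $\chi^2$-divergence and then by $\tfrac{1}{m}\norm[\hat q_n - q]_2^2$ with $\E\norm[\hat q_n - q]_2^2 \le 1/n$, and finally converting the $\hat q_n$-centered estimates into a $q$-centered one via \cref{lemma: kl-square norm equivalence} on the event $E_n$ --- is entirely self-contained given the finite-outcome-space assumption and the two lemmas. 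The final step (transferring from $\hat p$ to $p^*_n$ via \cref{lemma: kl pseudo triangle inequality} and $\kldiv[\hat p][p^*_n] \le c/n$) is the same in both proofs, as is the form of the lower bound $r_n = a\,\kldiv[q][\hat p] - b\,\epsilon_n$. What you buy with your approach is independence from a deep SLT theorem whose hypotheses (analyticity, relatively finite variance, resolution of singularities) are more than is needed here; what the paper's approach buys is generality beyond finite $\outcomespace$ and, via the Gaussian-process limit, sharper information about the limiting law of $n\,\kldiv[q][\hat p]$. Your flagged bookkeeping point --- that $\hat q_n$ may escape $\mathring{\Delta}_m(\outcomespace)$ for small $n$ and the lemmas must therefore be applied with constant $m/2$ on $E_n$ --- is exactly the right thing to be careful about, and your handling of it is correct; it is also a point the paper's proof avoids entirely by never reasoning about $\hat q_n$.
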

\begin{proof}
    Using the inequality from Lemma twice \eqref{lemma: kl pseudo triangle inequality}, we get 
    \begin{align*}
        & \kldiv[q][p^*_n] \leq C \brac{\kldiv[q][\hat{p}] + \kldiv[\hat{p}][p^*_n]} \leq C\kldiv[q][\hat{p}] + C \epsilon_n  \\
        & \kldiv[q][\hat{p}] \leq C \brac{\kldiv[q][p^*_n] + \kldiv[\hat{p}][p^*_n]} \leq C\kldiv[q][p^*_n] + C \epsilon_n
    \end{align*}
    for some constant $C > 0$. This jointly implies 
    \begin{equation}
        a \kldiv[q][\hat{p}] - b \epsilon_n \leq \kldiv[q][p^*_n] \leq C \kldiv[q][\hat{p}] + C \epsilon_n \label{eq: mle kl bounds}
    \end{equation}
    for some constants $a, b > 0$. Now, \citet[Main theorem 6.4]{watanabeAlgebraicGeometryStatistical2009} shows that $n \kldiv[q][\hat{p}] \to R$ where $R$ is a non-zero random variable with non-zero expectation. So, $\kldiv[q][\hat{p}] = O_p(1/n)$. Combined with the fact that $\epsilon_n = O(1/n)$, we get the desired result. 

    Finally, to show that $n\kldiv[q][p^*_n]$ is bounded with high probability, we observe that \citet[Main theorem 6.4]{watanabeAlgebraicGeometryStatistical2009} also shows that the random variable $R$, being a maximum of a Gaussian process with continuous sample paths on a compact parameter space $W$, is almost surely bounded. Hence, for sufficiently large $n$, we have $n\kldiv[q][p^*_n] \leq C n \kldiv[q][\hat{p}] + C n \epsilon_n$ which is bounded with high probability. 
     
\end{proof}

\subsection{Justification for $\epsilon_n = O(1/n)$} \label{app: epsilon_n}
Equation \eqref{eq: mle kl bounds} shows that even if $\epsilon_n \ll \frac{1}{n}$, then $n\kldiv[q][p^*_n]$ will still converge to a non-zero random variable (with non-zero expectation) since $\kldiv[q][\hat{p}]$ will then be the sole dominant term instead, which is still $O_p(1/n)$. For the purpose of minimizing the redundancy in Eq \eqref{eq: redundancy}, we would not want $\epsilon_n$ that decays faster than $O(1/n)$ since that would increase the cost of the model description term $-\log V^R_{p^*_n}(\epsilon)$ without saving message length. 

On the other hand, if $\epsilon_n \gg 1/n$, i.e. $ \epsilon_n = g(n) / n$ for some $g(n)$ that diverges to infinity, then $\kldiv[q][p^*_n]$ can be dominated by the discretisation cost, leaving $n \kldiv[q][p^*_n] = O_p(g(n))$. Yet, assuming $-\log V^R_{p^*_n}(\epsilon) = -C \log \epsilon + o(\log \epsilon)$ for some $C > 0$, then, relative to $\epsilon_n = O(1/n)$, this only provide saving for the model description term of order $O(\log g(n))$ and is thus not optimal.

%%%%%%%%%%%%%%%%%%%%%%%%%%%%%%%%%%%%%%%%%%%%%%%%%%%%%%%%%%%%%%%%%%%%%%%%%%%%%%%%%%%%%%%%%%
\section{Further theoretical discussion on Singular MDL}\label{app:theory-details}

\subsection{Phases and phase transition in code lengths}

In regular models, regardless of the underlying data distribution being modeled and regardless of the minimum of the population loss under consideration, the complexity, as measured by LLC, is always $d/2$, where $d$ is the model parameter count. The difference in complexity only shows up in lower-order terms in the form of local curvature. In contrast, the geometry of the loss landscape can change drastically for singular models with even small changes in the data distribution, and each minimum of the population loss can have a different LLC value. 

Importantly for compression, there can be sudden reversals in the balance of loss-complexity trade-off when the data size $n$ increases. This is a consequence of the fact that for different minima $w^*$ of $\poploss(w)$, the associated total code length has leading-order terms $f_j(n) = n \poploss(w_j^*) + \lambda_j(w^*) \log n$. For low $n$, minima with lower complexity but higher loss can be preferred since that can give rise to a lower code length. But as $n$ increases, the $O(n)$ term will dominate, and it is increasingly favored to pay a high $\lambda(w^*) \log n $ upfront cost for specifying a high complexity set of weights, which is then amortized by having a lower marginal cost for using those weights to send more symbols. This is the usual model selection procedure statisticians perform to balance model complexity and fit, but for singular models, this process can happen implicitly and internally to the model.

These phenomena are collectively known as \textit{phase transitions} in statistical learning. 
\citet{watanabeAlgebraicGeometryStatistical2009} first described these phase transitions, which have since been observed and measured in various settings. For example, \citet{Chen2023-qk} track phase transitions in a toy model of neural network superposition \citep{Elhage2022-bw}. They find that loss decreases as the LLC increases in a Bayesian-learning setting (performing Bayesian updates on increasing numbers  of data points) and also in an SGD-training setting (taking an increasing number of gradient steps for a fixed dataset size). While there are mature theoretical explanations for the Bayesian setting, the observations in the SGD setting remain empirical results \citep{urdshals2025structure,hoogland2024developmental,wang2024differentiationspecializationattentionheads}. Nonetheless, those results provide an important context for the present work where the trade-off between loss and model complexity -- thus compressibility -- is also a primary concern.

\subsection{I.I.D. Assumption} \label{app: iid assumption}

Needing to assume i.i.d. is a severe theoretical weakness for applications of the theory in linguistic domains. Neither the data-generating distribution $q^{(n)}$ nor the auto-regressive training objective treats sequences of tokens as i.i.d. sequences. Results in MDL and SLT can be generalized to non-i.i.d. settings like Markovian processes, but usually some form of ergodicity assumptions are needed. Those are likely violated by natural language-generating processes.

This is less of a theoretical issue when we are mainly discussing \emph{pretraining} loss as we can treat each chunk of text the size of the maximum context window, $M$, as a single data point and treat them as i.i.d.\ data. This means our outcome space is $\outcomespace = \mathbf{Vocab}^{M}$. While the underlying data-generating process for internet text certainly does not have this structure, it is reasonable to use this model for some pretraining data-loading process where the chunks are fed in as independent data points. 

The critical caveats are thus:
\begin{itemize}
    \item Our framework has yet to explain any capabilities gains via post-training methods like various forms of fine-tuning and reinforcement learning. 
    \item This framework is also not strong enough to discuss the base model capability (as opposed to their ability for compressing internet text) as most capabilities measures require modeling a joint distribution of the form $p(\text{long token sequence} | \text{prompt})$, which are likely non-stationary distributions. 
\end{itemize}
It is plausible that leading order indicators like loss and LLC are unaffected by these considerations, but that is an open theoretical and empirical question at this stage. There is evidence that the emergence of certain algorithmic capabilities correlates  sharply with changes in the LLC \citet{wang2024differentiationspecializationattentionheads, urdshals2025structure}

\section{LLM usage}
\label{app:LLM usage}

In the process of writing this paper, LLMs were used for literature search and copy-editing.

\end{document}